\newtheorem{theorem}{Theorem}[section]
\newtheorem{lemma}{Lemma}[section]
\newtheorem{definition}{Definition}[section]
\newtheorem{remark}{Remark}[section]
\newtheorem{assumption}{Assumption}[section]
\newtheorem{setup}{Setup}[section]
\def\D{{\mathcal D}}
\def\K{{\mathcal K}} 
\def\S{{\mathcal S}}
\def\L{{\mathcal L}}
\def\R{{\mathbb  R}}
\def\P{{\mathbb  P}}
\def\E{{\mathbb  E}}
\def\ba{\mathbf{x}}
\def\bu{\mathbf{u}}
\def\bv{\mathbf{v}}
\def\bx{\mathbf{w}}
\def\bz{\mathbf{z}}
\def\tk{\tau_{k+1}}
\def\k{\tau_{k}}
\def\sgn{{\rm sgn}}
\def\med{{\rm med}}
\def\soft{{\rm soft}}
\def\tauk{\S^{\tau_{k+1}}}
\def\blx{\overline{\mathbf w}} 
\def\bxi{{\boldsymbol  \epsilon}}
\def\bze{{\boldsymbol  \zeta}}
\def\bpi{{\boldsymbol  \pi}}
\def\bg{{\boldsymbol  g}}
\def\FE{{\tt FedEPM}}
\def\SFA{{\tt SFedAvg}}
\def\FP{{\tt FedProx}} 
\def\SFP{{\tt SFedProx}} 
\def\eqspace{ \arraycolsep=1pt\def\arraystretch}
\begin{document}

\title{Exact Penalty Method for Federated Learning}

\author{Shenglong Zhou and and Geoffrey Ye Li,~\IEEEmembership{Fellow, IEEE}
\thanks{S.L. Zhou and G.Y. Li are with the ITP Lab,  Department of Electrical and Electronic Engineering, Imperial College London, London SW72AZ, United Kingdom,  e-mail: \{shenglong.zhou, geoffrey.li\}@imperial.ac.uk.}
}



\maketitle

\begin{abstract}
Federated learning has burgeoned recently in machine learning,  giving rise to a variety of research topics.  Popular optimization algorithms are based on the frameworks of the (stochastic) gradient descent methods or the alternating direction method of multipliers.  In this paper, we deploy an exact penalty method to deal with federated learning and propose an algorithm, \texttt{FedEPM}, that enables to tackle four critical issues in federated learning: communication efficiency, computational complexity, stragglers' effect, and data privacy. Moreover, it is proven to be convergent and testified to have high numerical performance. 

\end{abstract}

\begin{IEEEkeywords}
Exact penalty method, communication efficiency, computational complexity,  partial devices participation, differential privacy.
\end{IEEEkeywords}

\section{Introduction}\label{sec:introduction}
\IEEEPARstart{F}{ederated} learning (FL) \cite{konevcny2015federated,konevcny2016federated} is a recently cutting-edge technology in machine learning and has seen various applications in vehicular communications \cite{samarakoon2019distributed, pokhrel2020federated,
elbir2020federated, posner2021federated}, digital health 
\cite{rieke2020future}, mobile edge
computing \cite{mao2017survey, zhou2021communication}, just naming a few. However, it is still in its infancy and has many critical issues to be addressed \cite{kairouz2019advances,li2020federated,qin2021federated}, such as the communication efficiency, computation  efficiency, stragglers' effect, and data privacy. Before we present an overview of the relevant work, we would like to briefly introduce some mathematical background of   FL.

Generally speaking, FL is a collaborative approach that allows multiple (say $m$) clients (or devices) to train a shared model without exchanging their original data to maintain privacy. More specifically,  client $i$ has a loss function $f_i(\cdot) := f_i(\cdot;\D_i)$ associated with private data $\D_i$, where $f_i:\R^n{\to}\R$ is continuous and bounded from below. The task is to train a shared parameter (or model) $\bx^*$ by solving the following optimization problem,
\begin{eqnarray}\label{FL-opt}
 \arraycolsep=1.4pt\def\arraystretch{1.5}
\begin{array}{lll}
\bx^*={\rm argmin}_{\bx\in\R^n }~f(\bx):=\sum_{i=1}^m f_i(\bx).
\end{array}\end{eqnarray}
One of the popular approaches to address the above problem is based on the distributed optimization. The framework is depicted in Fig. \ref{fig:structure-FL}, where local clients update their parameters using the private data and then upload them to a central server for aggregation to get a shared parameter. However, such a framework induces a number of practical issues as follows. 
\begin{itemize}[leftmargin=17pt] 
\item[I1.]When exchanging parameters between clients and the server (i.e., at steps \textcircled{2} and \textcircled{4} in Fig. \ref{fig:structure-FL}), communication efficiency must be taken into consideration as frequent communications would consume expensive resources (e.g., transmission power, energy, and bandwidth). 
\item[I2.] Since there may be large numbers of clients engaging in the training, it is unrealistic to equip all of them with strongly computational devices. Hence, an advantageous FL algorithm should  reduce the computational complexity, thereby alleviating the computational burdens for clients. 
\item[I3.] Due to inadequate transmission resources and limited computational capacity, some clients might delay sharing parameters (which is known for the so-called stragglers' effect, namely, everyone waits for the slowest) or even withdraw from the training. To ensure a steady training process, this matter should be tackled. 
\item[I4.] As shown in Fig. \ref{fig:structure-FL}, clients send their updated parameters to the server at step \textcircled{4}. These parameters are possibly associated with their data directly, resulting in potential privacy disclosure during the training. So it is critical to preserve the privacy to eliminate the reluctance of clients before putting FL into practice.
\end{itemize}
In the subsequence, we present a brief overview of the relevant work based on the above four perspectives.
\begin{figure}[H]
	\centering
	\includegraphics[width=1\linewidth]{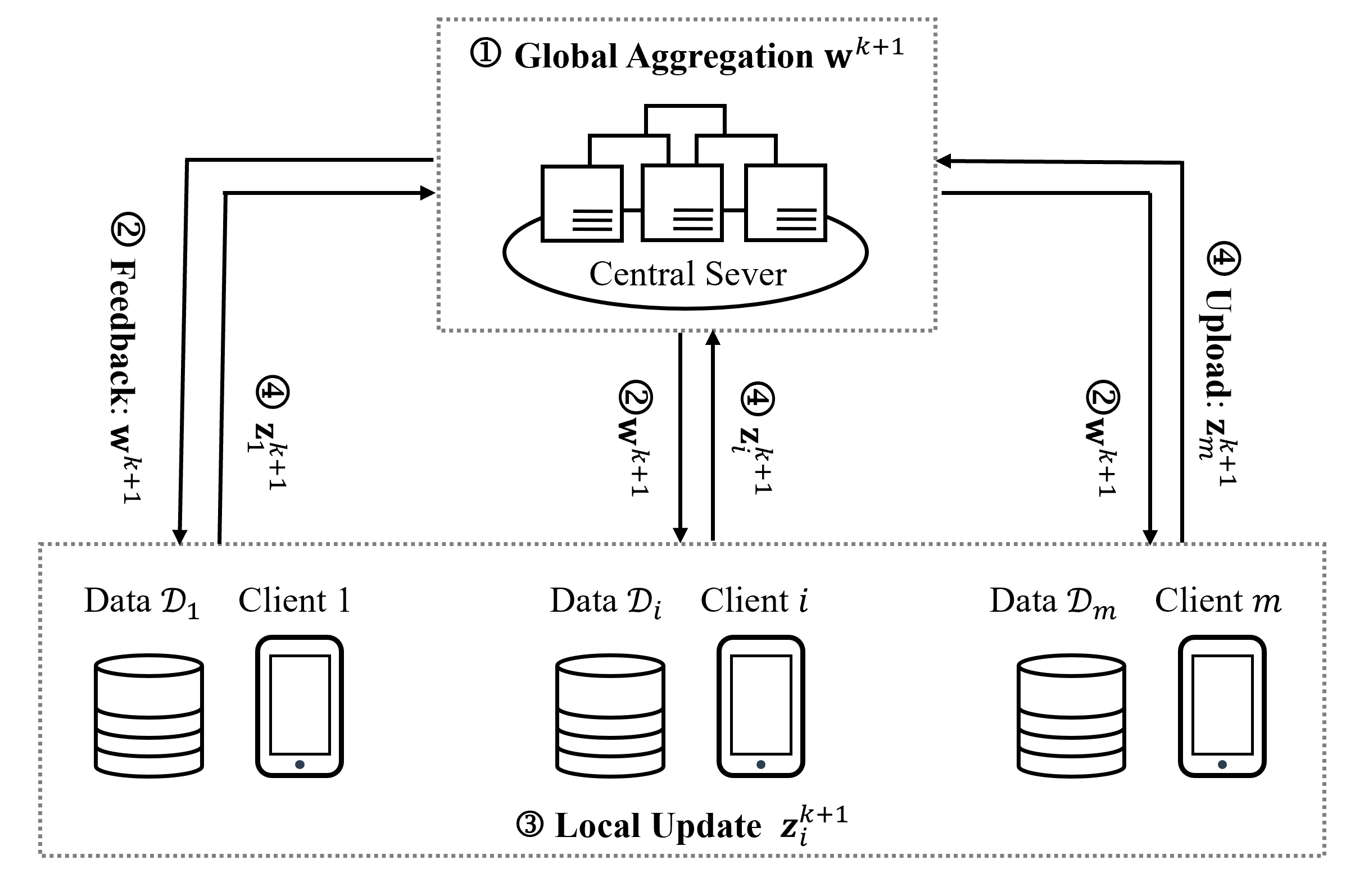}  
\caption{Structure of FL.\label{fig:structure-FL}}
\end{figure}
\subsection{Prior arts}
\subsubsection{Communication efficiency}\label{sec:pastwork-communication}
Data compression and communication rounds (CR) reduction are two popular techniques to improve communication efficiency. The former aims to quantize and sparsify the local parameters before the transmission so as to lessen the amount of the transmitted contents \cite{stich2018sparsified,di2018efficient,
wangni2018gradient,sattler2019robust}. For the latter, communications between clients and the server occur in a periodic fashion so as to reduce CR. More specifically, there is a tolerance of skipping steps \textcircled{1}, \textcircled{2}, and \textcircled{4} in Fig. \ref{fig:structure-FL} for several consecutive iterations, during which step \textcircled{3} is kept running. As a result, CR is reduced directly, which can improve  the communication efficiency significantly 
 \cite{mcmahan2017communication,li2019convergence,
 yu2019parallel,li2020federatedprox, zhou2022efficient}. In this paper, we will take advantage of this tactic.
\subsubsection{Computational complexity}\label{sec:pastwork-communication} In general, there is little effort for the server to aggregate all collected local parameters, so the major computations are imposed on clients who usually need to solve a series of sub-problems during the training. In order to diminish the computational complexity, there are two promising solutions. The first one is the stochastic approximation of some key items, such as the full gradients. However, it may take expensive prices (e.g., storage and time) to compute them, especially for complex functions and in big data settings. Then to fasten the computation, randomly selecting a small portion of data to approximate the full gradient (known as stochastic gradient) is effective. This idea has been extensively applied into the stochastic gradient descent (SGD) algorithms, such as the federated averaging ({\tt FedAvg} \cite{mcmahan2017communication}), local SGD  \cite{stich2018local, Lin2020Don}, and those in \cite{DeepLearning2015, AsynchronousStochastic2017, wang2021cooperative}. The second routine to reduce the computational complexity is solving sub-problems inexactly, which has been widely adopted in the inexact alternating direction method of multipliers (ADMM). They allow clients to update their parameters via solving sub-problems approximately, thereby alleviating the computational burdens and accelerating the learning speed exceptionally \cite{ding2019stochastic, Inexact-ADMM2021, ryu2022differentially, zhang2020fedpd}.

\subsubsection{Partial devices participation} To cope with the stragglers' effect, a straightforward solution is to avoid selecting clients who suffer from inadequate communication resources or limited computational capacity. In other words,  for each round of communication, the server is suggested to pick up a portion of clients in good conditions to engage in the training. This is often phrased as  partial devices participation  \cite{li2020federatedprox, zhou2022federated}. Such  schemes have  been assembled in  {\tt FedAvg} \cite{mcmahan2017communication}, secure federated averaging ({\tt SFedAvg} \cite{li2020secure}),  {\tt FedProx} \cite{li2020federatedprox},  {\tt FedADMM} \cite{zhou2022federated}, and {\tt  FedSPD-DP} \cite{li2022federated}. The former two algorithms were designed based on SGD while the latter two made full use of inexact ADMM.

\subsubsection{Privacy preservation}\label{sec:pastwork-privacy} 
 To maintain privacy, key elements (e.g., gradients) involving clients' data  are often perturbed by noises before sharing them with others. For instance, parameter $\bz_i^{k+1}$ in Fig \ref{fig:structure-FL} is usually formed  by a combination of some key items from local clients. Before sending $\bz_i^{k+1}$ to the server for aggregation, one of its key items should be contaminated by noises to prevent privacy leakage.  This idea is also known for differential privacy (DP) \cite{dwork2006calibrating,chaudhuri2011differentially,abadi2016deep}.  

An impressive body of work has developed optimization algorithms incorporating DP techniques to preserve privacy. We can summarize them into two groups. The first group targets FL problems from the primal perspective. For instance, differentially private SGD proposed in \cite{abadi2016deep} added the noise to perturb the stochastic gradient, and a similar strategy was also integrated in \cite{naseri2020local}. Moreover, some algorithms intended to contaminate the updated parameters directly, such as the secure federated averaging  \cite{li2020secure} and the noising before aggregation FL \cite{wei2020federated}.  The second group solves FL problems from the primal-dual perspective. Most of them were cast based on the frameworks of ADMM, where noises were added either on the primal variables (i.e., the trained parameters) directly \cite{guo2018practical,zhang2018improving, huang2019dp, wang2020privacy} or the dual variables (i.e., the Lagrange multipliers) \cite{zhang2016dynamic,zhang2018recycled}. In addition, the DP-based inexact ADMM in \cite{ryu2021differentially, ryu2022differentially}  inserted an noisy affine function  into the augmented Lagrange function.  In \cite{ding2019stochastic}, a differentially private stochastic ADMM has been cast and perturbed the gradient of the Lagrange function by noises.   Fairly recently, a federated stochastic primal-dual with DP algorithm was created in \cite{li2022federated}  and noises were added to perturb the combination of the primal and dual variables.

\subsection{Our contributions}
 
The main contribution of this paper is to develop a new FL algorithm with the following advantageous properties. 

\begin{itemize}[leftmargin=*] 
\item  Instead of solving original problem \eqref{FL-opt}, we focus on its penalized version (i.e. model \eqref{FL-EPM}) by using the elastic net regularization. Our established theory shows that the penalized model is exact for the original model. In other words, the solution to the original problem must be a solution to the penalized model when the penalized constant is large enough, as shown in Theorem \ref{the-EPM}. Therefore, it is rational to pay attention to the penalized model.

\item Differing from GD (or SGD) and ADMM-based algorithms, we benefit from the alternating direction method to solve the penalized model and cast a new FL algorithm, {\tt FedEPM}, which has a simple structure and is easy to be implemented. We then prove that the algorithm preserves the DP at each iteration and the generated sequence of the objective function values is convergent in expectation under very mild conditions that are weaker than those usually assumed for convergence analysis in FL.

\item The proposed algorithm is capable of dealing with four critical issues in FL. More specifically, since exchanging of parameters only occur at certain iterations, it is communication-efficient. In addition, sub-problems are solved inexactly and the main items (i.e., gradients) are computed only at certain iterations, resulting in a low computational complexity. Moreover, the algorithm exploits partial devices participation, so it is able to eliminate the stragglers' effect. Finally,  clients send the noisy parameters to the server for aggregation, thereby maintaining privacy.

\item Comparing with two leading FL algorithms, {\tt FedEPM} can achieve higher communication and computation efficiency as well as maintain stronger privacy.

\end{itemize}
\subsection{Organization}
This paper is organized as follows.  In the next section, we present all notations and introduce some useful functions.  In  Section \ref{sec:EPM},  we reformulate original problem \eqref{FL-opt} to a penalized version using the elastic net regularization and establish the exact penalty theory in Theorem \ref{the-EPM}. The algorithmic design and descriptions of advantageous properties of {\tt FedEPM}  are given in Section \ref{sec:ADMMFL}. We carry out the privacy and convergence analysis in Sections \ref{sec:privacy} and \ref{sec:converence}, respectively.  In Section  \ref{sec:num}, we conduct some numerical experiments to demonstrate the performance of our proposed algorithm. Some remarks are given in the last section.

%

 \section{Preliminaries}\label{sec:pre}
 We begin with summarizing the notation   employed throughout this paper and then introduce several useful functions.
 
 \subsection{Notations}
 We use  plain,  bold, and capital letters to present scalars, vectors, and matrices, respectively. For instance, $\lambda$ and $\eta$ are scalars, $\bx$ and $\bv$  are vectors, and $W$ is a matrix. Let $\lfloor t\rfloor$ represent the largest integer smaller than $t{+}1$, e.g., $\lfloor 1.1\rfloor{=}\lfloor 2\rfloor{=}2$,  and $[m]{:=}\{1,2,\cdots ,m\}$ with `$:=$' meaning define.    We denote $\R^n$ the $n$-dimensional Euclidean space equipped with inner product ${\langle\cdot,\cdot\rangle}$ defined by $\langle\bx,\bv\rangle{:=}\sum_i w_iv_i$. Let $\|\cdot\|$ be the Euclidean norm (i.e., $\|\bx\|^2{:=}\langle\bx,\bx\rangle$) and $\|\cdot\|_1$ be the $1$-norm (i.e.,  $\|\bx\|_1{:=}\sum_i |w_i|$).  

\subsection{Some useful functions}
The 1-dimensional soft-thresholding operator is defined as
 \begin{eqnarray} \label{def-soft-1d} 
\eqspace{1.25} \begin{array}{lll}
\soft(t,a) &:=& {\rm arg}\min_{x}~ \frac{1}{2}(x-t)^2+ a  |x|\\
&=& \left\{\begin{array}{lccr}
t- a ,~~& t&>& a ,\\
0, & |t|&\leq& a ,\\
t+ a ,&t&<&- a .
\end{array}\right.
\end{array}
\end{eqnarray}
The sub-gradient of absolute function $|t|$ is given by
 \begin{eqnarray*}
 \eqspace{1.25}
\sgn(t) =\left\{\begin{array}{lll}
\{1\},& t>0,\\
{[-1,1]}, ~~& t=0,\\
\{-1\},&t<0.
\end{array}\right.
\end{eqnarray*}
We emphasize that the sub-gradient is a set rather than a scalar. Then the $n$-dimensional cases are defined elementwisely, that is, for  any $\bx\in\R^n$,
  \begin{eqnarray} \label{three-funcs}
   \eqspace{1.5}
 \begin{array}{lll}
\sgn(\bx) &=& (\sgn(w_1),\sgn(w_2),\cdots,\sgn(w_n))^\top,\\
\soft(\bx, a ) &=& (\soft(w_1, a ),\soft(w_2, a ),\cdots,\soft(w_n, a ))^\top\\
&=& {\rm argmin}_{\bz\in\R^n} \frac{1}{2}\|\bz-\bx\|^2+  a  \|\bz\|_1,\\
\end{array}\end{eqnarray}
where $\top$ stands for the transpose. Function  $f$ is said to be gradient Lipschitz continuous with constant $r{>}0$ if 
 \begin{eqnarray}\label{Lip-r} 
\|\nabla f(\bx)-\nabla f(\bv  ) \|  \leq r\| \bx -\bv  \|,~~\forall~\bx, \bv \in\R^n 
  \end{eqnarray}
where $\nabla  f(\bx)$  represents the gradient of $f$ with respect to $\bx$. Examples of  gradient Lipschitz continuous functions include the least squares and the logistic loss function. Hereafter, for a group  of vectors $\bx_i$  in $\R^n$, we denote 
\begin{eqnarray*}
\begin{array}{lll}
 W:=(\bx_1,\bx_2,\cdots ,\bx_m).
\end{array}\end{eqnarray*}
The median of $W$ is calculated by  
  \begin{eqnarray*} 
\med(W) = 
\left[\begin{array}{c}
\med(w_{11},w_{21},\cdots,w_{m1})\\
\med(w_{12},w_{22},\cdots,w_{m2})\\
\vdots\\
\med(w_{1n},w_{2n},\cdots,w_{mn})
\end{array}
\right],
\end{eqnarray*}
where $w_{is}$ is the $s$th entry of $\bx_i$ and $\med(w_{1},w_{2},\cdots,w_{m})$ is the median value of $\{w_{1},w_{2},\cdots,w_{m}\}$. It is easy to see that
  \begin{eqnarray} \label{opt-med}
  \begin{array}{c}
\med(W) =  {\rm argmin}_{\bx\in\R^n} \sum_{i=1}^m \|\bx-\bx_i\|_1.\end{array}
\end{eqnarray}
 
  \section{FL via Exact Penalty Method} \label{sec:EPM}
By introducing auxiliary variables, $\bx_i=\bx,i\in[m]$,  model \eqref{FL-opt} can be equivalently rewritten as
\begin{eqnarray}\label{FL-opt-ver1}
\begin{array}{lll}
 \underset{ \bx, W}{\min}~~  \sum_{i=1}^{m}   f_i(\bx_i),~~{\rm s.t.}~~\bx_i=\bx,~i\in[m].
\end{array}\end{eqnarray}
Instead of solving the above problem, we focus on the following penalty model, 
\begin{eqnarray}\label{FL-EPM}
\begin{array}{lll}
 \underset{ \bx, W}{\min}~~  F(\bx, W):=\sum_{i=1}^{m}  ( \underset{=:F_i(\bx, \bx_i)}{\underbrace{f_i(\bx_i) +  \varphi(\bx_i-\bx))}},
\end{array}\end{eqnarray}
where  $\varphi(\cdot)$ is the so-called elastic net regularization \cite{zou2005regularization},
\begin{eqnarray}\label{Elastic}
\begin{array}{lll}
 \varphi(\bz):=\lambda \|\bz\|_1 + \frac{\eta}{2} \|\bz\|^2,
\end{array}\end{eqnarray}
with $\lambda{>}0$ and $\eta{>}0$ are given constants.  Before embarking on solving the above problem, we present the optimality condition of problems  \eqref{FL-opt-ver1} and   (\ref{FL-EPM}) as follows.
\begin{definition}\label{def-sta} A point $(\bx^*, W^*)$ is a stationary point of   problem  (\ref{FL-opt-ver1}) if there are $\bpi^*_i,i\in[m]$ satisfying 
\begin{eqnarray}
 \arraycolsep=1.4pt\def\arraystretch{1.25}
\label{opt-con-FL-opt-ver1}
  \left\{\begin{array}{rcll}
 0 &=& \nabla  f_i(\bx_i^*)+\bpi_i^*, ~~&i\in[m],  \\ 
0&=& \bx_i^*-\bx^*
,&i\in[m],\\ 
0 &=& \sum_{i=1}^{m} \bpi_i^*.
\end{array} \right.
\end{eqnarray} 
A point $(\bx^o, W^o)$ is a stationary point of   problem  (\ref{FL-EPM}) if there are $\bpi_i^o\in \sgn(\bx_i^o-\bx^o), i\in[m]$ satisfying 
\begin{eqnarray}
 \eqspace{1.5}
\label{opt-con-FL-EPM}
  \left\{\begin{array}{rcll}
 0&=& \nabla  f_i(\bx_i^o)+ \lambda \bpi_i^o + \eta(\bx_i^o - \bx^o ) ,~~ i\in[m],  \\  
0&=&  \sum_{i=1}^{m} (\lambda \bpi_i^o + \eta(\bx_i^o - \bx^o ) ).
\end{array} \right.
\end{eqnarray} 
\end{definition}
It is not difficult to see that any optimal solution must be a stationary point for problems  \eqref{FL-opt-ver1} and   (\ref{FL-EPM}). If  $f_i$ is convex for every $i\in[m]$, then a point is an optimal solution if and only if it is a stationary point.    Based on the definition of stationary points, our first result shows that the penalty model is exact, namely,   a stationary point of problem  (\ref{FL-opt-ver1}) must be a stationary point of problem  (\ref{FL-EPM}) when $\lambda$ is over a threshold.
\begin{theorem}[Exact Penalty Theorem]\label{the-EPM} Let $(\bx^*, W^*)$ be a stationary point of    problem  (\ref{FL-opt-ver1}). Then it is also  a stationary point of   problem  (\ref{FL-EPM}) for any 
\begin{eqnarray}
 \eqspace{1.5}
\label{lower-bd-lambda}
 \begin{array}{rcll}
\lambda\geq \lambda^*:=   \max\limits_{i\in[m]}\max\limits_{j\in[n]} |(\nabla  f_i(\bx^*))_j|.
\end{array}\end{eqnarray}  
\end{theorem}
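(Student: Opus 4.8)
The plan is to take the given stationary point $(\bx^*, W^*)$ of problem \eqref{FL-opt-ver1} and exhibit multipliers $\bpi_i^o$, $i\in[m]$, that certify it as a stationary point of \eqref{FL-EPM} in the sense of \eqref{opt-con-FL-EPM}. The starting observation is that the second block of \eqref{opt-con-FL-opt-ver1} forces $\bx_i^* = \bx^*$ for every $i$, so the consensus residual $\bx_i^* - \bx^*$ is identically zero. Consequently the quadratic term $\eta(\bx_i^* - \bx^*)$ drops out of both equations of \eqref{opt-con-FL-EPM}, and, crucially, the required membership set becomes $\sgn(\bx_i^* - \bx^*) = \sgn(0) = [-1,1]^n$ (componentwise), i.e. the full unit box, which gives maximal freedom in choosing $\bpi_i^o$.

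First I would propose the candidate $\bpi_i^o := -(1/\lambda)\nabla f_i(\bx^*)$, which is well defined since $\lambda > 0$. With the $\eta$ term gone, the first line of \eqref{opt-con-FL-EPM} reduces to $\nabla f_i(\bx_i^*) + \lambda \bpi_i^o = 0$, which holds by construction (recall $\bx_i^* = \bx^*$). Next I would check the summation condition in the second line of \eqref{opt-con-FL-EPM}: since the $\eta$ terms vanish, it reads $\lambda \sum_{i=1}^m \bpi_i^o = -\sum_{i=1}^m \nabla f_i(\bx^*) = 0$. This last equality is exactly what the first and third blocks of \eqref{opt-con-FL-opt-ver1} deliver, because $\bpi_i^* = -\nabla f_i(\bx_i^*)$ together with $\sum_{i=1}^m \bpi_i^* = 0$ yields $\sum_{i=1}^m \nabla f_i(\bx^*) = 0$.

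The only substantive step, and the one where the threshold $\lambda^*$ is used, is verifying that the candidate multiplier actually lies in the subdifferential, i.e. $\bpi_i^o \in [-1,1]^n$. Componentwise this amounts to $|(\bpi_i^o)_j| = |(\nabla f_i(\bx^*))_j|/\lambda \leq 1$ for all $i\in[m]$ and $j\in[n]$, which is equivalent to $\lambda \geq |(\nabla f_i(\bx^*))_j|$ for all such $i,j$. Taking the maximum over $i$ and $j$, this is precisely the hypothesis $\lambda \geq \lambda^* = \max_{i\in[m]}\max_{j\in[n]} |(\nabla f_i(\bx^*))_j|$. I expect this containment check to be the crux of the argument: the definition of $\lambda^*$ is tailored so that scaling the gradient by $1/\lambda$ pulls every coordinate into $[-1,1]$, which is exactly what lets the nonsmooth $\ell_1$ subgradient absorb the gradient at the consensus point. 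Everything else is bookkeeping, so no separate difficulty arises; note in particular that convexity of the $f_i$ is not needed, since the whole argument is carried out directly at the level of the stationarity conditions \eqref{opt-con-FL-opt-ver1} and \eqref{opt-con-FL-EPM}.
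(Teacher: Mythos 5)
Your proposal is correct and follows essentially the same route as the paper: both take $(\bx^o,W^o)=(\bx^*,W^*)$ with $\bpi_i^o=\bpi_i^*/\lambda=-\nabla f_i(\bx^*)/\lambda$, observe that the stationarity equations of \eqref{opt-con-FL-EPM} then follow from \eqref{opt-con-FL-opt-ver1} since $\bx_i^*=\bx^*$, and reduce the whole matter to the containment $|(\nabla f_i(\bx^*))_j|/\lambda\leq 1$, which is exactly the threshold condition \eqref{lower-bd-lambda}. If anything, your componentwise verification is written more cleanly than the paper's chain of relations, which conflates a signed quantity with its absolute value.
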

\begin{proof} Let $(\bx^o, W^o)=(\bx^*, W^*)$ and $\bpi_i^o = \bpi_i^*/\lambda$. Then \eqref{opt-con-FL-EPM} is ensured by \eqref{opt-con-FL-opt-ver1}. Therefore, we only need to verify $ \bpi_i^o\in \sgn(\bx_i^o-\bx^o), i\in[m]$. This can be derived by
\begin{eqnarray}
 \eqspace{1.5} 
 \begin{array}{rcll}
 \pi_{ij}^o &=& \pi_{ij}^*/\lambda ~\overset{\eqref{opt-con-FL-opt-ver1}}{=} -(\nabla  f_i(\bx^*))_j /\lambda\leq -\lambda^*/\lambda\\
 & \overset{\eqref{lower-bd-lambda}}{\in}& [-1,1] =  \sgn(0) \overset{\eqref{opt-con-FL-opt-ver1}}{=}  \sgn(w_{ij}^*-w^*_{j})\\
 &=& \sgn(w_{ij}^o-w_{j}^o),
\end{array}\end{eqnarray}  
 finishing the proof.
\end{proof}
The above theorem indicates that to find a stationary point of  original problem \eqref{FL-opt-ver1}, it makes sense to find  a stationary point of problem \eqref{FL-EPM} with a properly large $\lambda$. We next show a lemma beneficial to our algorithmic design.
\begin{lemma}\label{lemma-solution-EN} For given scalars $\{w_1,w_2,\cdots,w_m\}$, let $w_s^{\downarrow}$ be the $s$th largest scalar among them and define
\begin{eqnarray}\label{def-x-s}
 \eqspace{1.5} 
 \begin{array}{rcll}
h(w)&:=& \sum_{i=1}^m (\lambda|w-w_i| + \frac{\eta}{2}(w-w_i)^2),\\
w(s) &:=&  \frac{1}{m}\sum_{i=1}^m w_i + \frac{\lambda}{ \eta} (  \frac{2s}{m}-1)
\end{array}\end{eqnarray} 
If there is an $s^*$ satisfying $w_{s^*}^{\downarrow} {>} w(s^*) {>} w_{s^*+1}^{\downarrow}$, then
\begin{eqnarray}\label{x-s-*}
 \eqspace{1.5} 
 \begin{array}{rcll}
w^*=w(s^*) &=& {\rm argmin}_w h(w).
\end{array}\end{eqnarray} 
Otherwise, the solution can be derived by 
\begin{eqnarray}\label{x-s-*-w}
 \eqspace{1.5} 
 \begin{array}{rcll}
w^* = {\rm argmin}_{w\in\{w_1,\cdots,w_m\}} h(w).
\end{array}\end{eqnarray}
\end{lemma}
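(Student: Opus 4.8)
The plan is to exploit that $h$ is a sum of one-dimensional \emph{strictly} convex functions (each summand $\lambda|w-w_i|+\frac{\eta}{2}(w-w_i)^2$ is strictly convex because $\eta>0$), so $h$ admits a \emph{unique} minimizer $w^*$, characterized by $0\in\partial h(w^*)$. First I would sort the data as $w_1^{\downarrow}\ge\cdots\ge w_m^{\downarrow}$ and abbreviate $\overline{w}:=\frac1m\sum_{i=1}^m w_i$. On each open interval $I_s:=(w_{s+1}^{\downarrow},w_s^{\downarrow})$ the function $h$ is differentiable, and since exactly $s$ of the $w_i$ lie above $w$ (each contributing $\sgn(w-w_i)=-1$) and $m-s$ lie below (each contributing $+1$), a direct count gives
\[
h'(w)=\lambda(m-2s)+\eta m\,(w-\overline{w}),\qquad w\in I_s .
\]
Setting $h'(w)=0$ yields precisely the candidate $w(s)$ of \eqref{def-x-s}. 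This is the computational heart of the lemma and is routine.

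Next I would split into the two cases of the statement, using convexity to turn stationarity into global optimality. If some $s^*$ satisfies $w_{s^*}^{\downarrow}>w(s^*)>w_{s^*+1}^{\downarrow}$, then $w(s^*)$ is an interior stationary point of the smooth strictly convex restriction $h|_{I_{s^*}}$, hence the global minimizer, which gives \eqref{x-s-*}. Conversely, if no such $s^*$ exists, then $w^*$ cannot lie in the interior of any $I_s$: such membership would force $h'(w^*)=0$, i.e. $w^*=w(s)$ together with $w(s)\in I_s$, contradicting the failure of the interior condition. The only remaining candidates are the breakpoints $w_1,\dots,w_m$, where $h$ is non-smooth; minimizing over this finite set therefore recovers the unique $w^*$, yielding \eqref{x-s-*-w}.

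The step needing the most care is confirming that the ``otherwise'' branch is genuinely exhaustive, i.e. that $w^*$ cannot escape $[w_m^{\downarrow},w_1^{\downarrow}]$ into one of the two unbounded end-intervals (the cases $s=0$ and $s=m$, which the data-point set $\{w_i\}$ would miss). I would dispatch this by examining the extreme indices directly: from \eqref{def-x-s},
\[
w(0)=\overline{w}-\tfrac{\lambda}{\eta}<w_1^{\downarrow},\qquad w(m)=\overline{w}+\tfrac{\lambda}{\eta}>w_m^{\downarrow},
\]
using $w_m^{\downarrow}\le\overline{w}\le w_1^{\downarrow}$ and $\lambda,\eta>0$. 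Thus the stationarity equation for $s=0$ (resp. $s=m$) has \emph{no} root in its end-interval $(w_1^{\downarrow},\infty)$ (resp. $(-\infty,w_m^{\downarrow})$), so $h$ is strictly monotone there and $w^*$ is pinned inside $[w_m^{\downarrow},w_1^{\downarrow}]$. Combined with uniqueness of the minimizer, this makes the two cases mutually exclusive and jointly exhaustive, completing the argument.
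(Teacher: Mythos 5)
Your proposal is correct and follows essentially the same route as the paper: both characterize the unique minimizer of the strictly convex $h$ via the (sub)gradient optimality condition, compute the stationary point on each open interval between consecutive sorted values to obtain $w(s)$, rule out the two unbounded end intervals, and then split into the interior case \eqref{x-s-*} versus the breakpoint case \eqref{x-s-*-w}. Your explicit derivative count $h'(w)=\lambda(m-2s)+\eta m(w-\overline{w})$ on $I_s$ is just the paper's evaluation of $\sum_i(\lambda\pi_i^*+\eta(w^*-w_i^{\downarrow}))$ with $\pi_i^*=\pm1$, so no substantive difference.
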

We note that solving \eqref{x-s-*-w} is quite easy since we can calculate $\{h(w_1),\cdots,h(w_m)\}$ and pick the smallest value. Moreover, second case \eqref{x-s-*-w} occurs in many scenarios, e.g., when $w_1=w_2=\cdots=w_m$.   
The above lemma is easily extended to multi-dimensional case.
\begin{lemma} For given vectors $\{\bx_1,\bx_2,\cdots,\bx_m\}$ in $\R^n$, let $w_{ij}$ be the $j$th entry of vector $\bx_i$ and
\begin{eqnarray}\label{x-s-*-wj}
 \eqspace{1.5} 
 \begin{array}{rcll}
w_j^*&:=&  {\rm argmin}_w \sum_{i=1}^m (\lambda|w-w_{ij}| + \frac{\eta}{2}(w-w_{ij})^2) 
\end{array}\end{eqnarray}
for any $j\in[n]$. Then it follows
\begin{eqnarray}\label{x-s-*-vec} \eqspace{1.5} 
 \begin{array}{rcll}
(w_1^*,w_2^*, \cdots,w_n^* )^\top= \begin{array}{rcll}
  {\rm argmin}_\bx \sum_{i=1}^m \varphi(\bx_i-\bx).
\end{array}
\end{array}\end{eqnarray} 
\end{lemma}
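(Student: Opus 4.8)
The plan is to exploit the fact that the aggregate objective separates completely across the $n$ coordinates of the shared variable $\bx$, reducing the $n$-dimensional problem to the $n$ independent scalar problems already solved in \Cref{lemma-solution-EN}. First I would expand both norms entrywise: since $\|\bx_i-\bx\|_1=\sum_{j=1}^n|w_{ij}-w_j|$ and $\|\bx_i-\bx\|^2=\sum_{j=1}^n(w_{ij}-w_j)^2$, the elastic net penalty \eqref{Elastic} rewrites as $\varphi(\bx_i-\bx)=\sum_{j=1}^n(\lambda|w_{ij}-w_j|+\frac{\eta}{2}(w_{ij}-w_j)^2)$. Substituting this into $\sum_{i=1}^m\varphi(\bx_i-\bx)$ and interchanging the two finite sums over $i\in[m]$ and $j\in[n]$ yields
\[
\sum_{i=1}^m\varphi(\bx_i-\bx)=\sum_{j=1}^n g_j(w_j),\qquad g_j(w):=\sum_{i=1}^m\Big(\lambda|w-w_{ij}|+\frac{\eta}{2}(w-w_{ij})^2\Big).
\]

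Second I would note that the right-hand side is a sum of $n$ functions, the $j$th of which, $g_j$, depends only on the single entry $w_j$ of $\bx$. Minimizing a separable function over $\R^n=\R\times\cdots\times\R$ decouples into minimizing each summand independently over its own coordinate, so the $j$th entry of ${\rm argmin}_{\bx}\sum_{i=1}^m\varphi(\bx_i-\bx)$ equals ${\rm argmin}_w g_j(w)$. Comparing $g_j$ with the scalar objective in \eqref{x-s-*-wj} shows this coordinate is exactly $w_j^*$; stacking the $n$ entries then gives identity \eqref{x-s-*-vec}.

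The argument involves no heavy computation; the only point deserving care is justifying that the separable reduction produces an \emph{equality} of minimizers rather than a mere inclusion. This is where $\eta>0$ is essential: each $g_j$ is strongly convex (a sum of the convex terms $\lambda|w-w_{ij}|$ and the $\eta$-strongly convex terms $\frac{\eta}{2}(w-w_{ij})^2$), hence coercive with a unique minimizer $w_j^*$, and the aggregate objective is likewise strongly convex in $\bx$ with a unique global minimizer. Uniqueness on both sides guarantees that the concatenation of the coordinatewise minimizers is the global minimizer, closing the proof; the closed form of each $w_j^*$ is then delivered by \Cref{lemma-solution-EN}.
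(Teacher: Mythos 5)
Your proposal is correct and is exactly the separability argument the paper has in mind (the paper omits the proof entirely, remarking only that the scalar lemma "is easily extended to the multi-dimensional case"): expand $\varphi(\bx_i-\bx)$ entrywise, swap the finite sums, and minimize each coordinate's scalar objective independently. Your added observation that $\eta>0$ makes each coordinate problem strongly convex, so both sides of \eqref{x-s-*-vec} are unique minimizers and the identity is a genuine equality rather than an inclusion, is a worthwhile detail the paper leaves implicit.
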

We present the implementation to calculate \eqref{x-s-*-vec} in Algorithm \ref{algorithm-Elasticnet}. We call it elastic net solver for given $m$ vectors $\{\bx_1,\bx_2,\cdots,\bx_m\}$, denoted as \texttt{ENS}$(\bx_1,\bx_2,\cdots,\bx_m)$. The computational complexity in the worse case is $O(nm\log(m))$.

 \begin{algorithm} [!th]
\SetAlgoLined
 
\For{$j=1,2,\cdots, n$}{
{\noindent  Order  $\{w_{1j}, w_{2j},{\cdots},w_{mj}\}$ as $\{w_{1j}^{\downarrow}, w_{2j}^{\downarrow},{\cdots},w_{mj}^{\downarrow}\}$.}

\For{$s=1,2,\cdots, m$}{
Compute  $w_j(s) = \frac{1}{m}\sum_{i=1}^m w_{ij} + \frac{\lambda}{ \eta} (  \frac{2s}{m}-1)$

\If{$s<m$ and $w_{sj}^{\downarrow} > w_j(s) > w_{(s+1) j}^{\downarrow}$}{ Break the inner loop of $s$.

Return $w_j^*=w_j(s)$. 
}}

\If{$s==m$}{
Return $w_j^*$ by solving \eqref{x-s-*-wj}.
}
}
Return $(w_1^*,w_2^*, \cdots,w_n^* )^\top$.
\caption{\texttt{ENS}$(\bx_1,\bx_2,\cdots,\bx_m)$ \label{algorithm-Elasticnet}}

\end{algorithm}
 \section{Algorithmic Framework of \FE }\label{sec:ADMMFL}
To solve problem \eqref{FL-EPM}, we adopt the alternating direction method described as follows: Given a starting point $W^0$, perform the following iterations for $k=0,1,2,\ldots,$
 \begin{eqnarray}\label{FL-EPM-AD}
 \eqspace{1.5}
\begin{array}{lll}
\bx^{k+1}&\in&{\rm argmin}_{\bx} ~F(\bx, W^k)\\
&  =& {\rm argmin}_{\bx} \sum_{i=1}^{m}   \varphi(\bx_i^k-\bx), \\
\bx_i^{k+1}&\in&{\rm argmin}_{\bx_i} F_i(\bx^{k+1}, \bx_i),~~~~ i\in[m].
\end{array}\end{eqnarray}

The above scheme can be fitted into a FL setting where the server aggregates all parameters through the first sub-problem in \eqref{FL-EPM-AD} and client $i$ updates its parameter $\bx_i^{k+1}$ according to the second sub-problems in  \eqref{FL-EPM-AD}. However, this scheme will incur four critical issues (i.e., I1-I4 in Introduction \ref{sec:introduction}). To deal with these issues, we modify the alternating direction method to develop a new framework in Algorithm \ref{algorithm-ICEADMM}, where
\begin{eqnarray}
\eqspace{1.5}
\begin{array}{l}
\k :=\lfloor k/k_0 \rfloor, ~~\bg_i^{\tk}:= \nabla f_i( \bx^{\tk}). 
\end{array}\end{eqnarray}  
 Since the algorithm solves exact penalty model \eqref{FL-EPM}, we name it \FE. In the sequel, we would like to highlight the advantageous properties of \FE.
 \begin{algorithm} [!th]
\SetAlgoLined
{\noindent  Give an integer $k_0>0$, let $S^0=[m]$,  and set $k = 0$.} 
\For{Every client $i=1,2,\cdots,m $}{
{\noindent  Initializes  $\mu_{i,0}>0$, $c_{i}>0$, and $\alpha_i>1$.}
 
 {\noindent Generates a starting point $\bx_i^0$ and  a noisy vector $\bxi_i^0$.}
 
 {\noindent Uploads $\bz_i^{0}=\bx_i^0+\bxi_i^0$ to the server.} 
}

\For{$k=0,1,2,\cdots $}{

\If{$  k\in\K:=\{0,k_0,2k_0,3k_0,\cdots \}$}{
{\noindent  The  server randomly selects $\S^{\tk}\subseteq [m]$ and  broadcasts $\bx^{\tk} $ to clients in $\S^{\tk}$, where }
\begin{eqnarray}\label{iceadmm-sub1}
\eqspace{1.5}
 \begin{array}{llll}
\bx^{\tk} &=&  {\rm argmin}_{\bx} \sum_{i=1}^{m}   \varphi(\bz^{\k}_i-\bx).\\
&=&  \texttt{ENS}(\bz^{\k}_1,\bz^{\k}_2,\cdots,\bz^{\k}_m)
\end{array}
\end{eqnarray} 
%
}

\For{every client $i\in \S^{\tk}$}{
 
{\noindent Updates its parameters as } 
\begin{eqnarray}\label{iceadmm-sub2}
 \eqspace{1.75}
\begin{array}{lll} 
\mu_{i,k+1}&=& \mu_{i,0} (1+c_i\|\bx_i^k-\bx^{\tk}\|^2)\alpha_i^{k+1},\\
\widetilde{\bx}_i^{k+1}&:=&\mu_{i,k+1} (\bx_i^k-\bx^{\tk}) -\bg_i^{\tk},\\ 
\bx^{k+1}_i& = &\bx^{\tk}+\frac{\soft  (\widetilde{\bx}_i^{k+1},  \lambda )}{\eta+\mu_{i,k+1}}. 
\end{array} 
\end{eqnarray}
\If{$  k+1\in\K$}{
{\noindent Generates a randomly noisy vector $ \bxi_i^{\tk}$.}

{\noindent Sends $\bz^{\tk}_i$ to the server, where}
 \begin{align}
\label{iceadmm-sub4}  
\bz^{\tk}_i&=\bx_i^{k+1}  + \bxi_i^{\tk}. 
\end{align}  
}
}

\For{every $i\notin \S^{\tk}$}{ 

 Client $i$ keeps their parameters by  
\begin{eqnarray}
\label{iceadmm-sub5}
\eqspace{1.5}
\begin{array}{llll}   
 (\bx^{k+1}_i,\bz^{\tk}_i, \mu_{i,k+1} )  
 \equiv  (\bx^{k}_i,\bz^{\k }_i, \mu_{i,k}). 
\end{array}
\end{eqnarray}
}
}
\caption{{\FE }: FL by exact penalty method \label{algorithm-ICEADMM}}

\end{algorithm}
\subsection{Communication efficiency} 
In Algorithm \ref{algorithm-ICEADMM}, clients and  the server communicate only  when 
 $k\in\K=\{0,k_0,2k_0,\cdots \}.$   Therefore, the larger $k_0$ the more steps for local updates and the fewer CR for convergence shown by our numerical experiments.  Therefore, this scheme is beneficial for improving communication efficiency.
 It is noted that such an idea has been extensively employed in literature \cite{AsynchronousStochastic2017,  yu2019parallel, wang2021cooperative,mcmahan2017communication,li2019convergence,
 li2020federatedprox, 
zhou2022federated, zhou2022efficient}.
\subsection{Computational complexity} 
One can observe that both the server and clients can solve their problems easily. More precisely, the sever solves \eqref{iceadmm-sub1} only at steps $k\in\K$, which  happens once for every consecutive $k_0$ iterations. The worst-case computational complexity  is $O(nm\log(m))$. At each iteration, clients update their parameters by \eqref{iceadmm-sub2}, where the major computation is to calculate $\bg_i^{\tk}{=}\nabla f_i( \bx^{\tk})$. Luckily, again for every consecutive $k_0$ iterations,  it only needs to be computed once due to $\k\equiv\tau_{sk_0},k=sk_0,sk_0+1,\cdots,sk_0+k_0$

We would like to emphasize that $\bx_i^{k+1}$  in \eqref{iceadmm-sub2} is an approximate solution to problem ${\rm min}_{\bx_i} F_i(\bx^{k+1}, \bx_i)$. In fact, it is a solution to the  problem,
\begin{eqnarray}\label{iceadmm-sub2-prob-1}
 \eqspace{1.5}
\begin{array}{r} 
   \bx^{k+1}_i = {\rm arg}\min\limits_{\bx_i}~ f_i(\bx^{\tk}) + \langle \bg_i^{\tk},\bx_i-\bx^{\tk}\rangle  \\
     +  \frac{\mu_{i,k+1 }}{2}\|\bx_i- \bx_i^k\|^2 + \varphi(\bx_i- \bx^{\tk}). 
\end{array} 
\end{eqnarray}
To see this, by letting $\bv_i  := \bx_i{-}\bx^{\tk}$, the above problem is equivalent to the following one,
\begin{eqnarray*} 
 \eqspace{1.5}
\begin{array}{lll} 
   \frac{\soft  (\widetilde{\bx}_i^{k+1},  \lambda )}{\eta+\mu_{i,k+1}}&=& {\rm arg}\min\limits_{\bv_i}~    \frac{1}{2}\Big\|\bv_i  -  \frac{\widetilde{\bx}_i^{k+1}}{\eta+\mu_{i,k+1}}\Big\|^2 {+}   \frac{\lambda}{\eta+\mu_{i,k+1}}\|\bv_i \|_1,
\end{array} 
\end{eqnarray*}
immediately resulting in the third equation in \eqref{iceadmm-sub2}.

In summary, \FE\ solves sup-problem inexactly and computes gradients only at certain iterations, thereby leading to relatively fast computation.  
\subsection{Partial devices participation}As partial clients are selected for the training at every step,  \FE \ enables us to deal with the straggler's effect. This strategy is similar to  {\tt FedAvg} \cite{mcmahan2017communication}, {\tt SFedAvg} \cite{li2020secure},  {\tt FedProx} \cite{li2020federatedprox},  {\tt FedADMM} \cite{zhou2022federated}, and {\tt  FedSPD-DP} \cite{li2022federated}. However, there is some difference among them.  First of all,  the global aggregation for the first three algorithms is employed on the selected clients in $\S^{\tau_k}$, namely,
\begin{eqnarray*} 
\eqspace{1.5}
 ~~\begin{array}{llll}
\bx^{\tk} =  {\rm arg}\min\limits_{\bx} \sum_{i\in\S^{\tau_k}}  \|\bz^{\k}_i-\bx\|^2 = \frac{1}{m} \sum_{i\in\S^{\tau_k}} \bz^{\k}_i,
\end{array}
\end{eqnarray*} 
where $\bz^{\k}_i=\bx^{k}_i+\bxi_i^{\k}$. 
The last two algorithms assemble parameters of all clients in $[m]$, namely,
 \begin{eqnarray*} 
\eqspace{1.5}
 \begin{array}{llll}
\bx^{\tk} =  {\rm arg}\min\limits_{\bx} \sum_{i=1}^{m}  \|\bv^{\k}_i-\bx\|^2 = \frac{1}{m} \sum_{i=1}^m \bv^{\k}_i,
\end{array}
\end{eqnarray*} 
where $\bv^{\k}_i$ is a combination of  $\bx^{k}_i$, the Lagrange multipliers, and noise $\bxi_i^{\k}$.  By contrast, although \FE\ aggregates parameters from all clients  as well, the aggregation is derived via solving sub-problem \eqref{iceadmm-sub1}, namely,
\begin{eqnarray*} 
\eqspace{1.5}
 \begin{array}{llll}
\bx^{\tk} =  {\rm argmin}_{\bx} \sum_{i=1}^{m}   \varphi(\bz^{\k}_i-\bx).\\ 
\end{array}
\end{eqnarray*} 
\subsection{Privacy preservation when communication}
We note that there is unnecessary to add noise to perturb its own parameter $\bx_i^{k+1}$ when the selected clients in $\S^{\tk}$ update their parameters. This enables us to ensure more accurate training. Only when they upload their parameters to the server (e.g., when $k+1\in\K$), the noise is added to perturb $\bx_i^{k+1}$ for the sake of protecting their data privacy. In the next section, we will show that such a strategy is capable of guaranteeing the so-called $\varepsilon$-differential privacy.

\section{Privacy Analysis}\label{sec:privacy}
 To establish the privacy guarantee, we introduce  the concept of $\varepsilon$-differential privacy \cite{chaudhuri2011differentially} as follows.
\begin{definition}[$\varepsilon$-Differential Privacy] A randomized mechanism $\cal M$ is $\varepsilon$-differentially private if for any two neighbouring datasets $\D$ and $\D'$
 differing in a single entry  and for any subsets of outputs $\cal O \subseteq {\rm range}(\cal M):$
  \begin{eqnarray}\label{def-DP}
  \begin{array}{lllll}
 \P({\cal M}(\D) \in {\cal O}) \leq  e^\varepsilon \cdot \P({\cal M}(\D')\in {\cal O}).
   \end{array}
  \end{eqnarray}  
 \end{definition}
 Similar to  \cite{ryu2022differentially}, we sample a noisy vector $\bxi$ with entries being independent and identically distributed (i.i.d.) Laplace variables with zero mean and a scale parameter  $\nu$, written as ${\rm Lap}(0,\nu)$. Its probability density function is given by
  \begin{eqnarray}\label{def-Laplace}
  \arraycolsep=1.4pt\def\arraystretch{1.5}
\begin{array}{lll}
d(\epsilon;0,\nu) = \frac{1}{2\nu} {\rm exp}\left\{-\frac{|\epsilon|}{2\nu}\right\}. 
\end{array}\end{eqnarray}
Particularly, for any $i\in[m]$, let $\widehat{\D}_i$ be a collection of datasets differing a single entry from  $\D_i$ and denote
\begin{eqnarray}\label{scale-para}
  \arraycolsep=0pt\def\arraystretch{1.5}
\begin{array}{lll}
 \Delta_i^{\tk}&:=&   \max\limits_{\D'_i \in \widehat{\D}_i } \|\bg^{\tk}_i(\D'_i)- \bg^{\tk}_i(\D_i)\|_1,\\
  \Delta_i^\infty &:=&      \max\limits_{{k+1}\in\K}\Delta_i^{\tk}.
\end{array}\end{eqnarray}  
\begin{setup}\label{setup-xi-1} For each $\tk\in\{0,1,2,\cdots\},$  every client $i\in\S^{\tk}$  generates $\bxi_i^{\tk}$ with entries $\{\epsilon_{ij}^{\tk}: j\in[n]\}$ being i.i.d. random variables from
\begin{eqnarray}\label{sample-noise}
  \arraycolsep=1.4pt\def\arraystretch{1.5}
\begin{array}{lll}
\epsilon_{ij}^{\tk}\sim {\rm Lap}\Big(0, \frac{ \Delta_i^{\tk}}{\varepsilon \mu_{i,k+1 } } \Big), ~~j\in[n],
\end{array}\end{eqnarray} 
 \end{setup}
Based on the above sampled noise,  \FE \ can ensure the $\varepsilon$-differential privacy at every $sk_0$-th iteration.
 \begin{theorem}\label{the-privacy} 
Under Setup \ref{setup-xi-1},  every $sk_0$-th iteration of \FE \ guarantees the $\varepsilon$-differential privacy, where $s=0,1,2\cdots$, namely,
 \begin{eqnarray}\label{the-privacy-eq}
 \arraycolsep=1.4pt\def\arraystretch{1.75}
  \begin{array}{lcl} 
 \P(\bz^{sk_0}|\D_i)  \leq  e^\varepsilon \cdot \P(\bz^{sk_0}|\D'_i). 
   \end{array}
  \end{eqnarray} 
 \end{theorem}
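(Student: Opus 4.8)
The plan is to isolate a single communication round and recognise the released vector $\bz_i^{\tk}$ as the output of a Laplace mechanism applied to a data-dependent deterministic quantity. Fix the communication index $s$ with $k+1=sk_0$, so that release happens through \eqref{iceadmm-sub4}, and condition on everything the current step inherits from the past: the server broadcast $\bx^{\tk}$, the previous local iterate $\bx_i^k$, and hence the weight $\mu_{i,k+1}$ and the noise scale $\nu_i:=\Delta_i^{\tk}/(\varepsilon\mu_{i,k+1})$ of \eqref{sample-noise}. Under this conditioning the only channel through which $\D_i$ influences $\bz_i^{\tk}=\bx_i^{k+1}+\bxi_i^{\tk}$ is the gradient $\bg_i^{\tk}=\nabla f_i(\bx^{\tk})$ entering $\widetilde{\bx}_i^{k+1}=\mu_{i,k+1}(\bx_i^k-\bx^{\tk})-\bg_i^{\tk}$ via \eqref{iceadmm-sub2}. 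Writing $\bx_i^{k+1}(\D_i)$ for the deterministic part viewed as a function of the dataset, the first objective is an $\ell_1$-sensitivity bound on this map.

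The crux is to establish, for every neighbour $\D'_i\in\widehat{\D}_i$,
\begin{eqnarray*}
\| \bx_i^{k+1}(\D_i)-\bx_i^{k+1}(\D'_i)\|_1 ~\leq~ \frac{\Delta_i^{\tk}}{\eta+\mu_{i,k+1}} ~\leq~ \frac{\Delta_i^{\tk}}{\mu_{i,k+1}}.
\end{eqnarray*}
Since $\bx^{\tk}$, $\bx_i^k$ and $\mu_{i,k+1}$ are common to both datasets, the difference of the arguments fed to the soft-thresholding operator is exactly $-(\bg_i^{\tk}(\D_i)-\bg_i^{\tk}(\D'_i))$, whose $\ell_1$-norm is at most $\Delta_i^{\tk}$ by the definition in \eqref{scale-para}. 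I then invoke that $\soft(\cdot,\lambda)$ is the proximal map of $\lambda|\cdot|$ in \eqref{def-soft-1d}, hence $1$-Lipschitz coordinatewise, so applying it and dividing by $\eta+\mu_{i,k+1}$ can only contract each per-coordinate difference; summing over the $n$ coordinates yields the displayed estimate.

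With the sensitivity in hand, the final step is the standard Laplace-mechanism estimate. Because the entries of $\bxi_i^{\tk}$ are i.i.d. with the common scale $\nu_i$ fixed above, the conditional density of $\bz_i^{\tk}$ factorises over coordinates, and each factor obeys $d(z_j-a_j;0,\nu_i)/d(z_j-a'_j;0,\nu_i)\le \exp(|a_j-a'_j|/(2\nu_i))$ by the reverse triangle inequality, where $a=\bx_i^{k+1}(\D_i)$ and $a'=\bx_i^{k+1}(\D'_i)$. Taking the product over $j$ turns the exponent into $\|\bx_i^{k+1}(\D_i)-\bx_i^{k+1}(\D'_i)\|_1/(2\nu_i)$; substituting the sensitivity bound and $\nu_i=\Delta_i^{\tk}/(\varepsilon\mu_{i,k+1})$ collapses the ratio to at most $e^{\varepsilon}$ (indeed to $e^{\varepsilon/2}$ under the normalisation in \eqref{def-Laplace}), which is precisely \eqref{the-privacy-eq}.

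The main obstacle is the sensitivity step rather than the Laplace estimate: one must ensure that passing the gradient perturbation through the nonlinear map $\soft(\cdot,\lambda)$ and the $\eta+\mu_{i,k+1}$ rescaling does not amplify the $\ell_1$-distance, which is exactly where nonexpansiveness of the proximal operator is decisive. A secondary technical point is the legitimacy of treating $\mu_{i,k+1}$ and $\Delta_i^{\tk}$ as common to $\D_i$ and $\D'_i$; this is justified by conditioning on the inherited state $(\bx^{\tk},\bx_i^k)$, so that the noise calibration is identical under the two neighbouring datasets and the Laplace normalising constants cancel in the density ratio.
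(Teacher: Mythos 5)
Your proposal is correct and follows the same two-step structure as the paper's own proof: first bound the $\ell_1$-sensitivity of the deterministic map $\D_i\mapsto \bx_i^{k+1}(\D_i)$ by passing the gradient perturbation (whose $\ell_1$-norm is at most $\Delta_i^{\tk}$ by \eqref{scale-para}) through the soft-thresholding and the $1/(\eta+\mu_{i,k+1})$ rescaling, and then apply the coordinatewise Laplace density-ratio bound. The one substantive difference lies in the key Lipschitz estimate. The paper proves, via a nine-case analysis, only that $|\soft(t,a)-\soft(t',a)|\leq 2|t-t'|$ (inequality \eqref{Lip-continuity-sfot}), and this factor $2$ is exactly absorbed by the factor $2$ in the exponent of the Laplace density \eqref{def-Laplace}, so the paper lands precisely on $e^{\varepsilon}$. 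You instead invoke nonexpansiveness of the proximal map, $|\soft(t,a)-\soft(t',a)|\leq |t-t'|$, which is sharper (and is in fact what the paper's own case analysis delivers if one does not give away the $-2a$ term in case C3); your sensitivity is therefore $\Delta_i^{\tk}/(\eta+\mu_{i,k+1})$ rather than twice that, and you obtain the stronger guarantee $e^{\varepsilon/2}$, which of course implies \eqref{the-privacy-eq}. Your explicit conditioning on the inherited state $(\bx^{\tk},\bx_i^k,\mu_{i,k+1})$ so that the noise calibration is common to $\D_i$ and $\D'_i$ is only implicit in the paper, and both treatments share the same (standard) caveat that $\Delta_i^{\tk}$ as defined in \eqref{scale-para} is itself data-dependent.
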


\section{Convergence analysis} \label{sec:converence}
This section aims to establish the convergence theory for \FE. We first present all assumptions on functions $f_i$. 
\begin{assumption}\label{ass-fi} Every $f_i, i\in[m]$ is gradient Lipschitz continuous with a constant $r_i>0$. 
\end{assumption} 
\begin{setup}\label{setup-D-1} The server and clients adopt strategies as follows.
\begin{itemize}[leftmargin=*]
  \item The server randomly selects $\{\S^1,\S^2,\S^3,\cdots\}$ satisfying   
\begin{eqnarray}  \label{S-S-m}
   \eqspace{1.5}
   \begin{array}{lll}
  \S^{\tau+1} \cup\S^{\tau+2}\cup\cdots \cup\S^{\tau+s_0}=[m] 
   \end{array}
 \end{eqnarray}  
 for any $\tau{\in}\{0,s_0,2s_0,\cdots\}$, where $s_0{>}0$ is a given integer.
 \item  Every client $i{\in}[m]$ 
generates noisy vectors $\{\bxi_i^{0},\bxi_i^{1},\bxi_i^{2},\cdots\}$ as Setup \ref{setup-xi-1}. 
\end{itemize} 
\end{setup}
The selection of $S^\tau$ indicates that for each group of $s_0$  sets $\{\S^{\tau+1},\S^{\tau+2},\cdots,\S^{\tau+s_0}\}$, all clients should be chosen at least once. In other words,   the maximum gap between  two consecutive selections of any client  $i\in[m]$ is no more than $2s_0$, namely,
\begin{eqnarray}  
\label{scheme-omega-2T}\eqspace{1.25}
   \begin{array}{lll} \max \Bigg\{u-v: 
   \begin{array}{l}
   i\in\S^{v},  i\in\S^{u},\\ 
   i\notin (\S^{v+1}\cup  \S^{v+2} \cup \cdots \cup\S^{u-1}) 
   \end{array}
   \Bigg\} < 2s_0.
   \end{array}
 \end{eqnarray} 
\begin{remark} Condition (\ref{S-S-m}) can be satisfied with a high probability. In fact, if $\S^1, \S^2, \cdots$ are selected independently with $|\S^1|{=}|\S^2|{=}\cdots{=}\rho m$, where $\rho \in (0,1)$, and indices in every $\S^t$ are uniformly sampled from $[m]$ without replacement, then the probability of client $i$ being selected in $\{\S^{\tau+1},\S^{\tau+2},\cdots,\S^{\tau+s_0}\}$ is
\begin{eqnarray*} 
\arraycolsep=1.5pt\def\arraystretch{1.5}
\begin{array}{lrl}
p_i&:=&1-\P(i\notin\S^{\tau+1}, i\notin\S^{\tau+2},\cdots,i\notin\S^{\tau+s_0})\\
& =&1-\P(i\notin\S^{\tau+1})\P(i\notin\S^{\tau+2})\cdots\P(i\notin\S^{\tau+s_0})\\
&=&1-(1- \rho)^{s_0},
\end{array} \end{eqnarray*}  
which is quite close to $1$ when $s_0$ is large, e.g., $p_i=0.999$ if $s_0=10$ and $\rho=0.5$ for any $ \tau\geq 1$.  
\end{remark}
We now present some constants as follows,
\begin{eqnarray}\label{constants}
  \arraycolsep=0pt\def\arraystretch{2}
\begin{array}{lll}
 \L^{k} &:=& \E_{\bxi} F(\bx^{\k},W^{k}){+} \sum_{i=1}^{m}(\frac{r_i^2}{2\mu_{i,0}c_i (\alpha_i-1)\alpha_i^{k}}  {+}  2\phi_{i,k-1}  ),\\
  \phi_{i,k}&:=& \frac{4n\lambda\Delta_i^{\infty} \alpha_{i}^{2s_0k_0}}{\varepsilon \mu_{i,0}(\alpha_i-1) \alpha_{i}^{k}}  +  \frac{8n\eta(\Delta_i^\infty\alpha_{i}^{2s_0k_0})^2}{(\varepsilon \mu_{i,0})^2 (\alpha_i^2-1) \alpha_{i}^{2k  } }. 
\end{array}\end{eqnarray} 
Our first result shows the descent properties of sequences  $\{ \L^k\}$ and $\{F(\bx^{\k},W^{k})\}$.
\begin{lemma}\label{lemma-decreasing-1} Under Assumption \ref{ass-fi} , it holds
  \begin{eqnarray} \label{decreasing-property-2}  
   \eqspace{1.75}
  \begin{array}{lcl}
&& F(\bx^{\tk},W^{k+1})-  F(\bx^{\k },W^{k})\\ 
 &\leq& 
      \sum_{i=1}^{m}\Big(   \frac{r_i^2}{2\mu_{i,0}c_i\alpha_i^{k+1} }    + 2    \varphi( \bze_i^{\k } ) \Big)\\
      &-& \sum_{i=1}^{m}  \frac{\mu_{i,k+1 }+\eta-r_i }{2}  \|\bx_i^{k+1}-\bx_i^{k}\|^2\\
      & -& \sum_{i=1}^{m}\frac{\eta}{4}  \|\bx^{\tk}-\bx^{\k }\|^2, 
    \end{array}    
    \end{eqnarray} 
    where $\bze_i^{\k } $ is defined as (\ref{def-zeta-ik}). 
If further assume all clients follow Setup \ref{setup-D-1}, then
\begin{eqnarray} 
 \label{decreasing-property-1}   
 \arraycolsep=1.5pt\def\arraystretch{1.5}
 \begin{array}{lll}
  \L^{k+1} - \L^{k} 
&\leq&   -       \sum_{i=1}^{m} \frac{\mu_{i,k+1 }+\eta-r_i }{2} \E_{\bxi} \|\bx_i^{k+1}-\bx_i^{k}\|^2\\
&  & -\sum_{i=1}^{m} \frac{\eta}{4}  \E_{\bxi}\|\bx^{\tk}-\bx^{\k }\|^2.
    \end{array}  
 \end{eqnarray}   
\end{lemma}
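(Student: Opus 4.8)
The plan is to split the one–step change into an aggregation part and a local part,
\begin{eqnarray*}
F(\bx^{\tk},W^{k+1})-F(\bx^{\k},W^k)
=\underbrace{\big(F(\bx^{\tk},W^k)-F(\bx^{\k},W^k)\big)}_{\text{server re-optimizes }\bx,\ W^k\text{ frozen}}
+\underbrace{\big(F(\bx^{\tk},W^{k+1})-F(\bx^{\tk},W^k)\big)}_{\text{clients move }\bx_i,\ \bx^{\tk}\text{ frozen}},
\end{eqnarray*}
and to bound each separately. Since $F(\bx,W^k)$ depends on $\bx$ only through $\sum_i\varphi(\bx_i^k-\bx)$, the aggregation part is $\sum_i[\varphi(\bx_i^k-\bx^{\tk})-\varphi(\bx_i^k-\bx^{\k})]$, and the local part decouples across $i$ into $\sum_i[F_i(\bx^{\tk},\bx_i^{k+1})-F_i(\bx^{\tk},\bx_i^k)]$.

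For the local part I would use that, by \eqref{iceadmm-sub2-prob-1}, $\bx_i^{k+1}$ minimizes the $(\mu_{i,k+1}{+}\eta)$-strongly convex surrogate $Q_i(\bx_i):=f_i(\bx^{\tk})+\langle\bg_i^{\tk},\bx_i-\bx^{\tk}\rangle+\frac{\mu_{i,k+1}}{2}\|\bx_i-\bx_i^k\|^2+\varphi(\bx_i-\bx^{\tk})$. Optimality gives $Q_i(\bx_i^k)-Q_i(\bx_i^{k+1})\ge\frac{\mu_{i,k+1}+\eta}{2}\|\bx_i^{k+1}-\bx_i^k\|^2$, which after expanding $Q_i$ becomes an upper bound on $\varphi(\bx_i^{k+1}-\bx^{\tk})-\varphi(\bx_i^k-\bx^{\tk})$. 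Adding the descent-lemma estimate $f_i(\bx_i^{k+1})-f_i(\bx_i^k)\le\langle\nabla f_i(\bx_i^k),\bx_i^{k+1}-\bx_i^k\rangle+\frac{r_i}{2}\|\bx_i^{k+1}-\bx_i^k\|^2$ from Assumption \ref{ass-fi} leaves the cross term $\langle\nabla f_i(\bx_i^k)-\bg_i^{\tk},\bx_i^{k+1}-\bx_i^k\rangle$, which I bound by $r_i\|\bx_i^k-\bx^{\tk}\|\,\|\bx_i^{k+1}-\bx_i^k\|$. Applying Young's inequality with the tailored weight $\mu_{i,0}c_i\alpha_i^{k+1}\|\bx_i^k-\bx^{\tk}\|^2$ produces exactly $\frac{r_i^2}{2\mu_{i,0}c_i\alpha_i^{k+1}}$ (the factor $\|\bx_i^k-\bx^{\tk}\|^2$ cancels), while the remaining quadratic weight equals $\mu_{i,k+1}-\mu_{i,0}\alpha_i^{k+1}$ by the definition of $\mu_{i,k+1}$. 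Collecting the coefficients of $\|\bx_i^{k+1}-\bx_i^k\|^2$ yields the per-client bound $\frac{r_i^2}{2\mu_{i,0}c_i\alpha_i^{k+1}}-\frac{\mu_{i,k+1}+\eta-r_i}{2}\|\bx_i^{k+1}-\bx_i^k\|^2$.

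For the aggregation part I would use that $\bx^{\tk}$ minimizes the $m\eta$-strongly convex map $\widetilde G(\bx):=\sum_i\varphi(\bz_i^{\k}-\bx)$ with $\bz_i^{\k}=\bx_i^k+\bxi_i^{\k}$, so $\widetilde G(\bx^{\tk})-\widetilde G(\bx^{\k})\le-\frac{m\eta}{2}\|\bx^{\tk}-\bx^{\k}\|^2$. Writing the target quantity as $\widetilde G(\bx^{\tk})-\widetilde G(\bx^{\k})$ plus the two noise discrepancies $\sum_i[\varphi(\bx_i^k-\bx)-\varphi(\bz_i^{\k}-\bx)]$ evaluated at $\bx\in\{\bx^{\tk},\bx^{\k}\}$, the quadratic cross terms combine so that the iterates $\bx_i^k$ cancel and only $\eta\langle\bx^{\tk}-\bx^{\k},\sum_i\bxi_i^{\k}\rangle$ survives, while the $\ell_1$ discrepancies are controlled by $2\lambda\sum_i\|\bxi_i^{\k}\|_1$ via the reverse triangle inequality. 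Absorbing the surviving inner product into half of the strong-convexity margin through Young's inequality, and using $\frac{\eta}{m}\|\sum_i\bxi_i^{\k}\|^2\le\eta\sum_i\|\bxi_i^{\k}\|^2$, leaves $-\frac{m\eta}{4}\|\bx^{\tk}-\bx^{\k}\|^2+\sum_i(\eta\|\bxi_i^{\k}\|^2+2\lambda\|\bxi_i^{\k}\|_1)=-\frac{m\eta}{4}\|\bx^{\tk}-\bx^{\k}\|^2+\sum_i 2\varphi(\bxi_i^{\k})$. This identifies the vector $\bze_i^{\k}$ of \eqref{def-zeta-ik} as the noise $\bxi_i^{\k}$ and, summed with the local part, gives \eqref{decreasing-property-2}.

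Finally, to reach \eqref{decreasing-property-1} I would take the expectation $\E_\bxi$ of \eqref{decreasing-property-2} and note that the deterministic constants were folded into $\L^k$ in \eqref{constants} precisely so that they telescope: the geometric term $\frac{r_i^2}{2\mu_{i,0}c_i(\alpha_i-1)\alpha_i^k}$ changes by exactly $-\frac{r_i^2}{2\mu_{i,0}c_i\alpha_i^{k+1}}$, cancelling the constant from the local part, and the decreasing $\phi_{i,k}$ is built so that $2(\phi_{i,k-1}-\phi_{i,k})$ dominates $2\E_\bxi\varphi(\bze_i^{\k})$; the latter reduces to the Laplace moments $\E|\epsilon_{ij}^{\tk}|=2\Delta_i^{\tk}/(\varepsilon\mu_{i,k+1})$ and $\E(\epsilon_{ij}^{\tk})^2=8(\Delta_i^{\tk})^2/(\varepsilon\mu_{i,k+1})^2$ of Setup \ref{setup-xi-1}. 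I expect the main obstacle to lie exactly in this bookkeeping: under partial participation a client's noise may have been generated up to $2s_0$ communication blocks earlier, so bounding its scale $\mu$ at the generation time against $\alpha_i^{k}$ requires the gap estimate \eqref{scheme-omega-2T} and is what forces the factor $\alpha_i^{2s_0k_0}$ and the coefficients $4n$, $8n$ in $\phi_{i,k}$; one must also treat the random $\mu_{i,k+1}$ and the random set $\S^{\tk}$ through the appropriate conditional expectations. Once these bounds are in place, the $r_i^2$ and $\varphi(\bze_i^{\k})$ contributions vanish and only the two negative quadratic terms remain, yielding \eqref{decreasing-property-1}.
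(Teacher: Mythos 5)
Your proposal is correct and follows essentially the same route as the paper's proof: the same split into the aggregation term $q_1^k=F(\bx^{\tk},W^k)-F(\bx^{\k},W^k)$ and the local term $q_2^k=F(\bx^{\tk},W^{k+1})-F(\bx^{\tk},W^k)$, the same strong-convexity-plus-noise-discrepancy bookkeeping for $q_1^k$ (with $\bze_i^{\k}$ in place of your $\bxi_i^{\k}$, which you correctly identify later as the possibly stale noise under partial participation), the same descent-lemma-plus-tailored-Young argument exploiting $\mu_{i,k+1}=\mu_{i,0}(1+c_i\|\bx_i^k-\bx^{\tk}\|^2)\alpha_i^{k+1}$ for $q_2^k$, and the same telescoping of the constants folded into $\L^k$ via the bound $\E\varphi(\bze_i^{\k})\leq\phi_{i,k-1}-\phi_{i,k}$. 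The only cosmetic difference is that you derive the per-client bound from the strong-convexity gap of the surrogate $Q_i$ rather than from the explicit first-order optimality condition \eqref{iceadmm-sub2-g-pi-u} with the subgradient $\bu_i^{k+1}$, which yields an equivalent (in fact marginally sharper) coefficient.
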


\begin{theorem}\label{global-obj-convergence-inexact}  Under Assumption \ref{ass-fi} and Setup \ref{setup-D-1}, the following results hold. 
 \ \begin{itemize}
 \item[i)]  $\{ \L^k\}$ and $\{\E_{\bxi} F(\bx^{\k},W^{k})\}$  converge. Moreover,
  \begin{eqnarray*}    
 \eqspace{1.75}
 \begin{array}{lll}
\lim\limits_{k\to\infty}\L^{k}= \lim\limits_{k\to\infty} \E_{\bxi} F(\bx^{\k},W^{k}). 
    \end{array}  \end{eqnarray*}  
 \item[ii)] $\lim\limits_{k\to\infty}\E_{\bxi} \|\bx_i^{k+1}-\bx_i^{k}\|^2 = \lim\limits_{k\to\infty} \E _{\bxi}\|\bx^{\tk}-\bx^{\k }\|^2=0$.
 \end{itemize}
 \end{theorem}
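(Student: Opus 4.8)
The plan is to treat $\{\L^k\}$ as a Lyapunov/merit function and extract everything from the descent inequality \eqref{decreasing-property-1} of Lemma \ref{lemma-decreasing-1}. First I would establish convergence of $\{\L^k\}$ by the monotone convergence principle. For the lower bound, recall that each $f_i$ is continuous and bounded from below, while $\varphi(\cdot)\ge0$ by \eqref{Elastic}, so $F(\bx,W)=\sum_i(f_i(\bx_i)+\varphi(\bx_i-\bx))\ge\sum_i\inf f_i>-\infty$; taking expectations shows $\E_\bxi F(\bx^{\k},W^k)$ is bounded below, and since every remaining term in the definition \eqref{constants} of $\L^k$ is nonnegative, $\{\L^k\}$ is bounded below. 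For monotonicity, the coefficient on the right of \eqref{decreasing-property-1} obeys $\mu_{i,k+1}+\eta-r_i\ge\mu_{i,0}\alpha_i^{k+1}+\eta-r_i$ because $\mu_{i,k+1}\ge\mu_{i,0}\alpha_i^{k+1}$ by \eqref{iceadmm-sub2}; as $\alpha_i>1$ and $[m]$ is finite, this is positive for all $i$ once $k$ exceeds some $K_0$, whence $\L^{k+1}\le\L^k$ for $k\ge K_0$. A sequence bounded below and eventually nonincreasing converges, settling the first assertion of part i).

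Next I would identify the limit of $\{\E_\bxi F(\bx^{\k},W^k)\}$. By \eqref{constants}, the gap $\L^k-\E_\bxi F(\bx^{\k},W^k)=\sum_{i=1}^m\big(\frac{r_i^2}{2\mu_{i,0}c_i(\alpha_i-1)\alpha_i^k}+2\phi_{i,k-1}\big)$, and inspecting $\phi_{i,k-1}$ one sees each summand carries a decaying factor $\alpha_i^{-k}$ or $\alpha_i^{-2k}$ against only the fixed constant $\alpha_i^{2s_0k_0}$ in its numerator, so the whole gap tends to $0$. Hence $\E_\bxi F(\bx^{\k},W^k)=\L^k-o(1)$ converges with $\lim_k\E_\bxi F(\bx^{\k},W^k)=\lim_k\L^k$, which is exactly part i).

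Finally, for part ii) I would sum \eqref{decreasing-property-1}. For $k\ge K_0$ both coefficients on the right are nonnegative, so writing $A_k:=\sum_i\frac{\mu_{i,k+1}+\eta-r_i}{2}\E_\bxi\|\bx_i^{k+1}-\bx_i^k\|^2\ge0$ and $B_k:=\sum_i\frac{\eta}{4}\E_\bxi\|\bx^{\tk}-\bx^{\k}\|^2\ge0$, the inequality reads $A_k+B_k\le\L^k-\L^{k+1}$. Summing over $k\ge K_0$ telescopes the right-hand side to $\L^{K_0}-\lim_k\L^k<\infty$, so $\sum_{k\ge K_0}(A_k+B_k)$ converges and therefore $A_k\to0$ and $B_k\to0$. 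Since $\eta>0$, $B_k\to0$ forces $\E_\bxi\|\bx^{\tk}-\bx^{\k}\|^2\to0$; and since each coefficient satisfies $\mu_{i,k+1}+\eta-r_i\ge\mu_{i,0}\alpha_i^{k+1}+\eta-r_i\to\infty$, the vanishing of $A_k$ forces $\E_\bxi\|\bx_i^{k+1}-\bx_i^k\|^2\to0$ for every $i$.

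Given Lemma \ref{lemma-decreasing-1}, the argument is standard real-analysis bookkeeping, so the main obstacle is not any single step but the interplay between the random multiplier $\mu_{i,k+1}$, which depends on $\|\bx_i^k-\bx^{\tk}\|^2$ and hence on the noise, and the expectation $\E_\bxi$ in which the norms are wrapped. The clean deterministic lower bound $\mu_{i,k+1}\ge\mu_{i,0}\alpha_i^{k+1}$ is what rescues both the eventual monotonicity and the extraction of $\E_\bxi\|\bx_i^{k+1}-\bx_i^k\|^2\to0$, and care is needed to ensure that this bound, rather than the random value, is the quantity invoked whenever $\mu_{i,k+1}$ is pulled outside the norm.
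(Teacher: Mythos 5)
Your argument is correct and follows essentially the same route as the paper: boundedness below of $\L^k$ plus eventual monotonicity from \eqref{decreasing-property-1} gives convergence, the vanishing of the gap $\L^k-\E_{\bxi}F(\bx^{\k},W^{k})=\sum_i t_{i,k}$ follows from the geometric decay in $\alpha_i^{-k}$, and part ii) is extracted from the telescoped (or limiting) form of the descent inequality. The only cosmetic differences are that the paper obtains part ii) by letting $k\to\infty$ in \eqref{gap-LK1-LK} rather than summing the series, and that it justifies boundedness of $t_{i,k}$ via a short contradiction argument on $\Delta_i^{\infty}\alpha_i^{-k}$; your closing remark about using the deterministic lower bound on $\mu_{i,k+1}$ (harmless for unselected clients since their increments vanish) matches how the paper handles the same point.
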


 \section{Numerical Experiments}\label{sec:num}
In this section, we conduct some numerical experiments to demonstrate the performance of \FE, which is available at \url{https://github.com/ShenglongZhou/FedEPM}. All numerical experiments are implemented through MATLAB (R2020b) on a laptop with 32GB memory and 2.3Ghz CPU.  

 \subsection{Testing example}
 We consider a classification problem using logistic regression, where local clients have their objective functions as 
\begin{eqnarray*}  
 \arraycolsep=0pt\def\arraystretch{1.5}
~~\begin{array}{llll}
f_i(\bx)= 
\frac{1}{d_i}\sum_{t=1}^{d_i}(\ln (1+{\rm e}^{\langle\ba_i^{t},\bx\rangle} )-b_i^t\langle\ba_i^t,\bx\rangle+\frac{\beta}{2}\|\bx\|^2).
\end{array} 
\end{eqnarray*}
Here, $\ba_i^t\in\R^{n}, b^t_i\in\{0,1\}$ are respectively the $t$th sample and sample label of client $i$ and $\beta>0$ is a penalty parameter (e.g., $\beta=0.001$ in our  numerical experiments).  We use the ``Adult income" dataset \cite{kohavi1996scaling}  from UCI Machine Learning Repository \cite{asuncion2007uci} to demonstrate the performance of \FE. This dataset has  48842 instances and 15 attributes (i.e., 6 continuous and 9 nominal/categorical ones). The processes to generate $(\ba, b)$ are as follows: (i)   
the instances with missing values are removed; (ii) the first 8 categorical attributes are converted into integers; (iii) data are attribute-wisely normalized to have a unit length.  (iv) the last attribute is used to generate labels $b\in\{1,0\}$ by converting $\{> 50k, \leq 50k\}$  into $\{1,0\}$. After these processes, we have $d=45222$ instances and $n=14$. We then randomly divide all instances into $m$ parts with sizes $d_1,\cdots,d_m$, namely, $d:=d_1+\cdots+d_m$.
 
\begin{algorithm} 
\SetAlgoLined
{\noindent  Initialize $k_0, \mu>0$, let $S^0=[m]$,  and set $k = 0$.} 

\For{Every client $i=1,2,\cdots,m $}{
{\noindent  Initializes  $\gamma_i>0$.}
 
  {\noindent Generates a starting point $\bx_i^0$ and  a noisy vector $\bxi_i^0$.}
 
 {\noindent Uploads $\bz_i^{0}=\bx_i^0+\bxi_i^0$ to the server.} 
}

\For{$k=0,1,2,\cdots $}{

\If{$  k\in\K:=\{0,k_0,2k_0,3k_0,\cdots \}$}{

{The  server randomly selects $\S^{\tk}\subseteq [m]$ and  broadcasts $\bx^{\tk} $ to clients in $\S^{\tk}$, where }  
\begin{eqnarray}\label{FP-sub1}
\eqspace{1.5} 
 \begin{array}{lcll}
\bx^{\tk} =     \frac{1}{|\S^{\tk}|}\sum_{i\in \S^{\tk}}  \bz^{\k}_i.
\end{array}
\end{eqnarray}
 }

\For{every $i\in \tauk$}{
 {\noindent Client $i$ updates its parameter by}
  
 [\SFA]
\begin{eqnarray} 
\label{fedvag}   
\eqspace{1.25}
\bx^{k+1}_i {=} \left\{ \begin{array}{llll}
\bx^{\tk} {-}  \gamma_i   \nabla  f_i(\bx^{\tk}),&~ k{\in}\K, \\
\bx_i^k {-}   \gamma_i     \nabla f_i(\bx_i^k),&~ k{\notin}\K. 
    \end{array}\right.
    \end{eqnarray}
    
 [\SFP]  
    \begin{eqnarray} 
\label{fedprox} 
\eqspace{1.25}
\begin{array}{lcll}
\bx^{k+1}_i {\approx} {\rm argmin} f_i(\bx_i){+}\frac{\mu}{2}\|\bx_i-\bx^{\tk}\|^2.
\end{array}
\end{eqnarray}
\If{$  k+1\in\K$}{
{\noindent Generates a randomly noisy vector $ \bxi_i^{\tk}$.}

{\noindent Sends $\bz^{\tk}_i$ to the server, where}
 \begin{align}
\label{iceadmm-sub4-1}  
\bz^{\tk}_i&=\bx_i^{k+1}  + \bxi_i^{\tk}. 
\end{align}  
}
 }
 
 \For{every $i\notin \tauk$}{
 {\noindent Client $i$ keeps $\bx^{k+1}_i=\bx^{k}_i$.} 
 }
 
}
\caption{\SFA\ and \SFP. \label{algorithm-FA}}
\end{algorithm}
\subsection{Benchmark algorithms and implementations}
We will compare our proposed method with \SFA\ \cite{li2020secure} and the modified version (\SFP) of \FP\ \cite{li2020federatedprox}. Their frameworks are given in Algorithm \ref{algorithm-FA}.
\begin{itemize}[leftmargin=*] 
\item For \SFA,  we make use of the full gradient rather than the stochastic gradient to ensure fair comparison. To do that, we only need to set the mini-batch size for each client $i$  as $d_i$. In addition, instead of fixing $\gamma_i$, we update it by
 \begin{eqnarray} \label{gamma-i-k}
 \eqspace{1.5}
\begin{array}{lll}\gamma_i:=\gamma_i^{k}:=\frac{2d_i}{\sqrt{2k_0+\lfloor k/k_0\rfloor}}.\end{array} 
\end{eqnarray}
\item  For \FP,  to  maintain the privacy, we also add noises to the parameters when they are uploaded to the server. So we call it the secured  \FP, dubbed as \SFP.  As required,  we need to approximately solve \eqref{fedprox}. To proceed with that, we employ Algorithm \ref{alg4-sub} in which $\mu=10^{-5}$ and $\ell$ is set as a small integer (e.g. $\ell=3$) for accelerating the computational speed in the numerical experiments.
 \begin{algorithm} [!th]
\SetAlgoLined
{\noindent  Input $ \bx^{\tk}, \bx_i^k, \ell,$ and $\gamma_i$ as \eqref{gamma-i-k}. Let $ {\bv}_i^1$ be given by}
\begin{eqnarray*}  
\eqspace{1.25}
 {\bv}_i^1 =\left\{ \begin{array}{llll}
\bx^{\tk}  ,&~ {\rm if} ~k \in \K, \\
\bx_i^k ,&~ {\rm if} ~ k \notin \K. 
    \end{array}\right.
    \end{eqnarray*} 
\For{$t=1,2,\cdots,\ell$}{
\begin{eqnarray*} 
 \eqspace{1.25}
\begin{array}{lll}\bv^{t+1}_i 
&=& {\bv}_i^t -\gamma_i(\nabla f_i( {\bv}_i^t) + \mu({\bv}_i^t-\bx^{\tk})).\end{array} 
\end{eqnarray*}
}
{\noindent  Output $\bx^{k+1}_i=\bv^{\ell+1}_i$}
\caption{Solving problem \eqref{fedprox} inexactly.\label{alg4-sub}}

\end{algorithm}  
\item  For \FE, there are two constants in model \eqref{FL-EPM} that should be finely tuned for different examples for achieving better performance, which can be done by the well known  cross-validation. However, we fix them as $\lambda=\eta/2$ and $\eta=(0.02m+1)(\rho+0.1)10^{-5}$  in the sequel for simplicity.
\end{itemize}
All algorithms start with the same initial point,  $\bx_i^0=0$, and terminate if solution
 $\bx^{\tau_{k}}$ satisfies  $\|\nabla f(\bx^{\k})\|^2 < 10^{-6}$  or
 \begin{eqnarray*} 
 \arraycolsep=1.0pt\def\arraystretch{1.5}
\begin{array}{lll}
 {\rm var}\{ f(\bx^{\k-3}), f(\bx^{\k-2}),f(\bx^{\k-1}),f(\bx^{\k})\}  \leq   \frac{n 10^{-8}}{1+|f(\bx^{\k})|},\\
\end{array} 
\end{eqnarray*}
where ${\rm var}(\bx)$ calculates the variance of $\bx$. Moreover, we use the same way to generate $\S^{\tk}$.   Specifically, $S^1, \S^2,\cdots$ are selected independently with $|\S^\tau|=\rho m$ for any $\tau\geq1$, where $\rho\in(0,1]$. Indices in each $\S^\tau$ are uniformly sampled from $[m]$ without replacement. Finally, noise vectors $\bxi_i^{\tk}$ are generated as, for any $j\in[n]$, 
\begin{eqnarray} \label{noise-generation}
  \arraycolsep=1.4pt\def\arraystretch{1.75}
\begin{array}{lll}
\epsilon_{ij}^{\tk}&\sim& {\rm Lap} \Big(0,\frac{  2\|\bg_i^{\tk}\|_1}{  \varepsilon \mu_{i,0 } (1+c_i\|\bx_i^k-\bx^{\tk}\|^2) \alpha_i^{k+1} }\Big),\\
\end{array}\end{eqnarray} 
where  
$\mu_{i,0}= 0.05, c_i=10^{-8}$, and $\alpha_i=1.001$. Here,  we calculate $ 2\|\bg_i^{\tk}\|_1$  to bound $\Delta_i^{\tk}$ since the latter is not easy to compute. In the following numerical simulation, we alter
\begin{itemize}[leftmargin=*] 
\item  $k_0\in\{4,8,12,16,20\}$ to see the communication efficiency;
 \item  $\rho\in(0,1)$ to see the effect of partial devices participation;
  \item  $\varepsilon\in(0,1)$ to see the effect of privacy preserving.
\end{itemize}

\subsection{Numerical comparison}
 We will report the following five factors to demonstrate  the performance of the three algorithms, 
\begin{eqnarray*}
 \eqspace{1.5} 
 \begin{array}{lll}
(f(\bx)/m,~\text{CR,~ TCT, ~LCT,~ SNR}),
\end{array}\end{eqnarray*} 
where $\bx$ is the final obtained solution, and the last four factors respectively represent the communication rounds, the total computational time (in second), the local computational time (in second) by clients between two consecutive communication rounds, and the signal-to-noise-ratio defined as
\begin{eqnarray*}
 \eqspace{1.5} 
 \begin{array}{lll}
{\rm SNR}:=\min_{i\in[m]}\log_{10} (\|\bx_i^{k+1}\|/\|\bxi_i^{\tk}\|).
\end{array}\end{eqnarray*} 
Here, $\bx_i^{k+1}$ and $\bxi_i^{\tk}$ are the point and noise produced by each algorithm in the last iteration. It is worth mentioning that LCT is useful to illustrate the computation efficiency for local clients, namely, the shorter LCT the lower computational complexity. In addition, the smaller {\rm SNR} indicates the higher privacy to be maintained.
\subsubsection{Accuracy} From Fig. \ref{fig:cr-obj}, we can observe that the three algorithms eventually approach the same objective function values when  $(\varepsilon,\rho)=(0.1,0.5), m\in\{50,100\},$ and $k_0\in\{4,12,20\}$. However, \FE\ declines the fastest and hence uses the fewest CR for each $k_0$. In the sequel, we will not report $f(\bx)/m$ obtained by three algorithms as they are basically similar to each other.

\begin{figure}[!th]
	 \begin{subfigure}{.24\textwidth}
	\centering
	\includegraphics[width=1.02\linewidth]{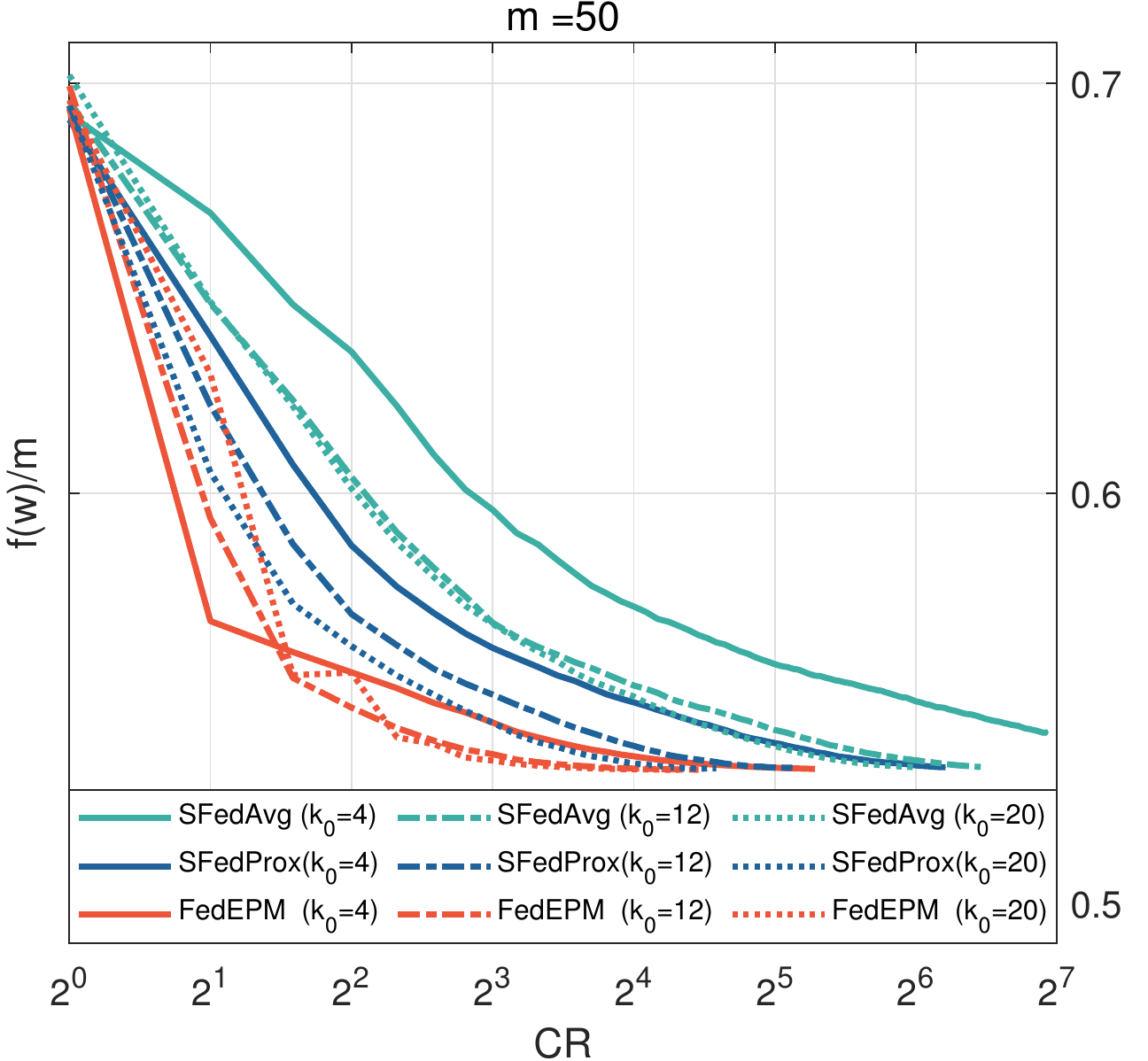}
\end{subfigure}	
\begin{subfigure}{.24\textwidth}
	\centering
	\includegraphics[width=1.02\linewidth]{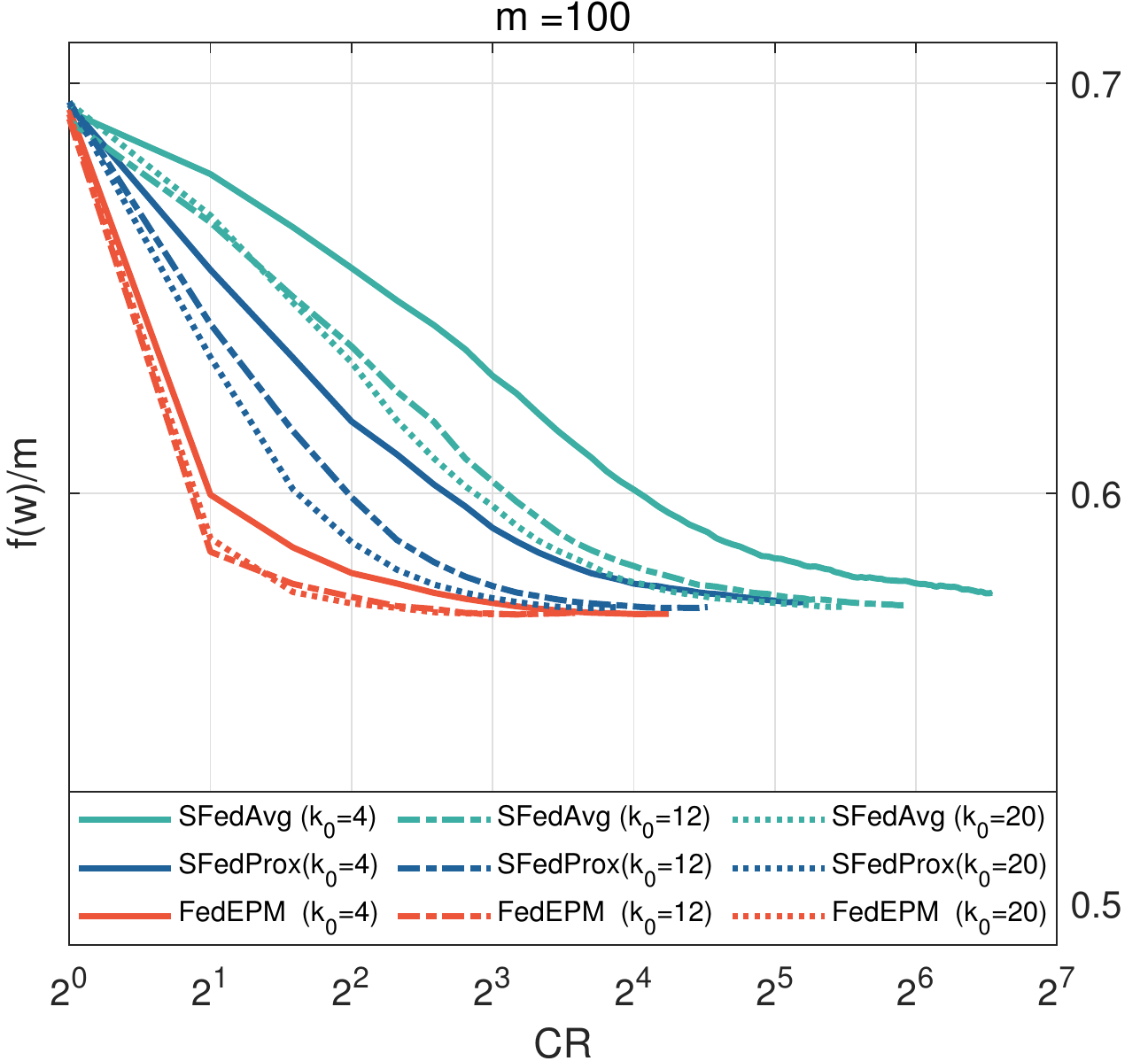}
\end{subfigure}   
\caption{TCT v.s. $k_0$.\label{fig:cr-obj}}
\end{figure}

\subsubsection{Communication and computation efficiency} We first  fix $(\varepsilon,\rho)=(0.1,0.5)$ and choose $m\in\{50,100\},$ $k_0\in\{4,8,12,16,20\}$. For each fixed $(\varepsilon,\rho,m,k_0)$, we run 100 trails and record the average results.  It can be clearly seen from Fig. \ref{fig:k0-tct} that the bigger $k_0$, the fewer CR and the shorter TCT. Apparently, \FE\ uses the fewest CR and runs the fastest, displaying the highest communication  efficiency.  In addition,  the data in TABLE \ref{tab:k0-lct} illustrate that \FE\ consumes much lower LCT, which indicates it has the highest computation efficiency among the three algorithms. Moreover, although \SFP\ consumes fewer CR than \SFA\ (see Fig. \ref{fig:k0-tct}), it is not computation-efficient based on the LCT data in the table. This is because every selected client has to  solve sub-problems \eqref{fedprox} by Algorithm \ref{alg4-sub} at each iteration.

\begin{figure}[!th]
	 \begin{subfigure}{.24\textwidth}
	\centering
	\includegraphics[width=1.01\linewidth]{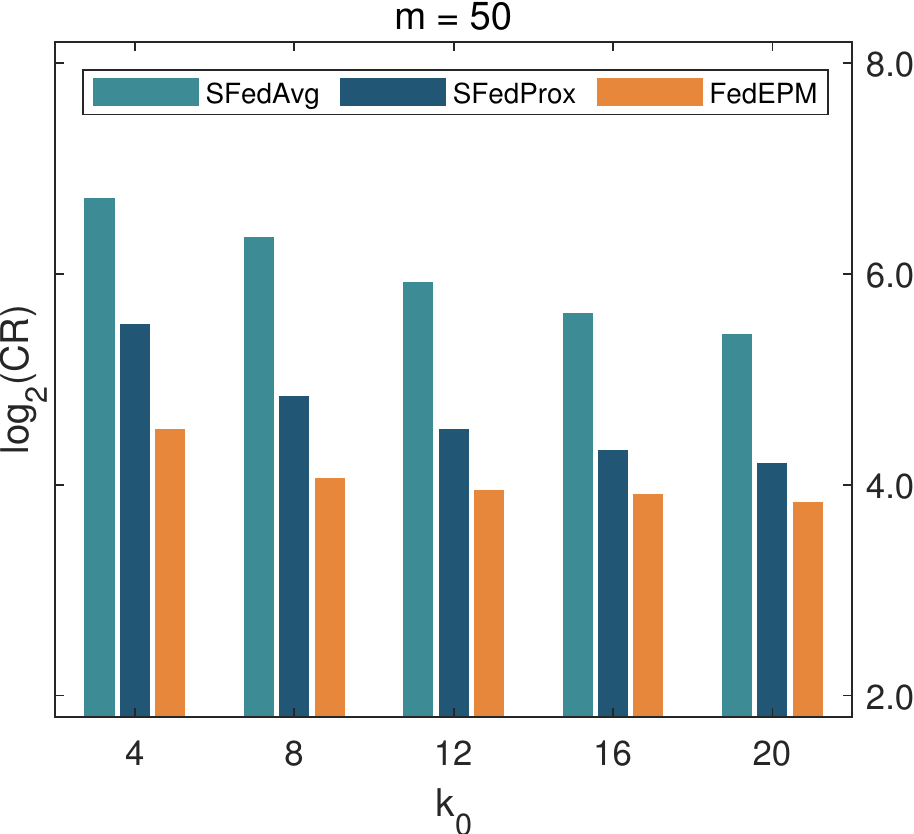}
\end{subfigure}	 
\begin{subfigure}{.24\textwidth}
	\centering
	\includegraphics[width=1.01\linewidth]{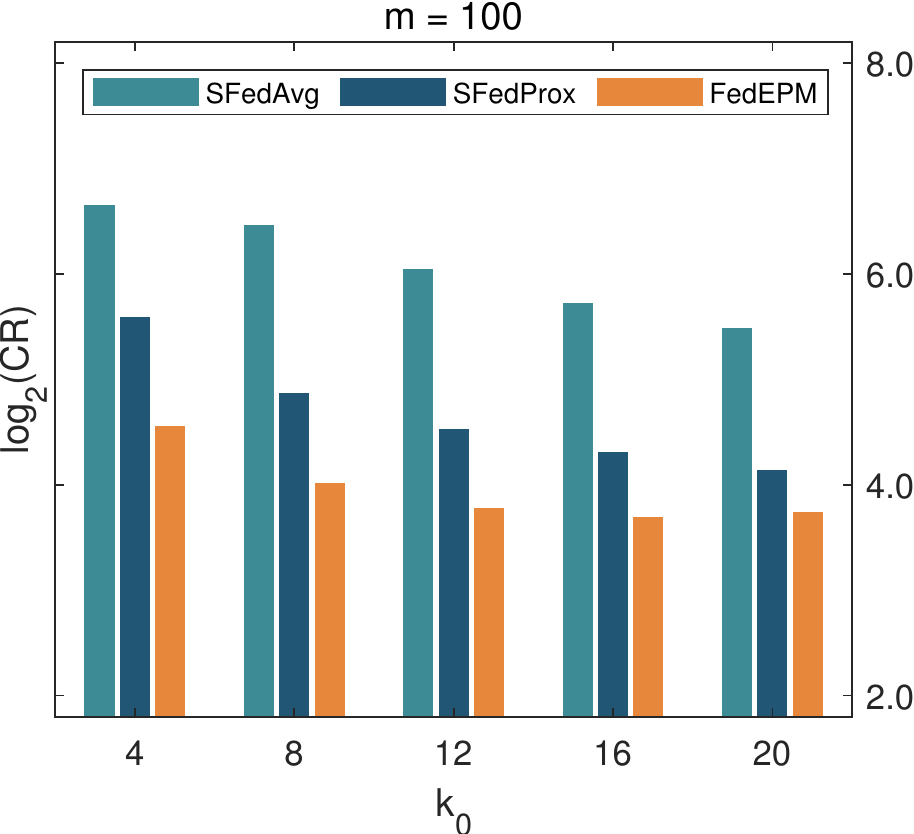}
\end{subfigure}   \\\vspace{2mm}

	 \begin{subfigure}{.24\textwidth}
	\centering
	\includegraphics[width=1.01\linewidth]{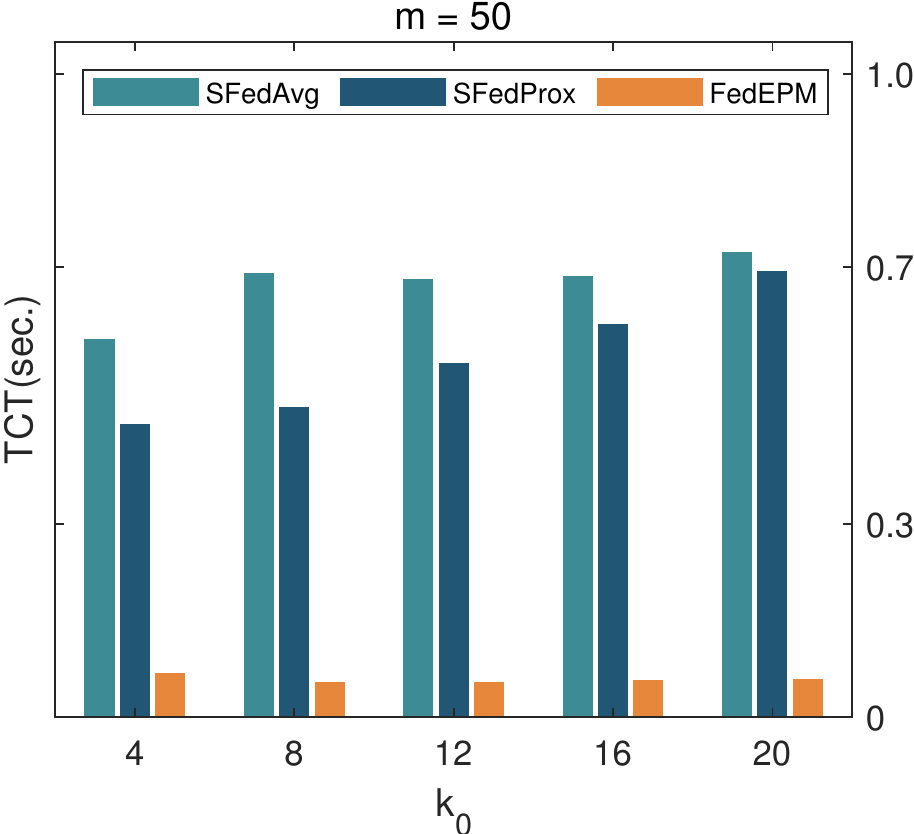}
\end{subfigure}	
\begin{subfigure}{.24\textwidth}
	\centering
	\includegraphics[width=1.01\linewidth]{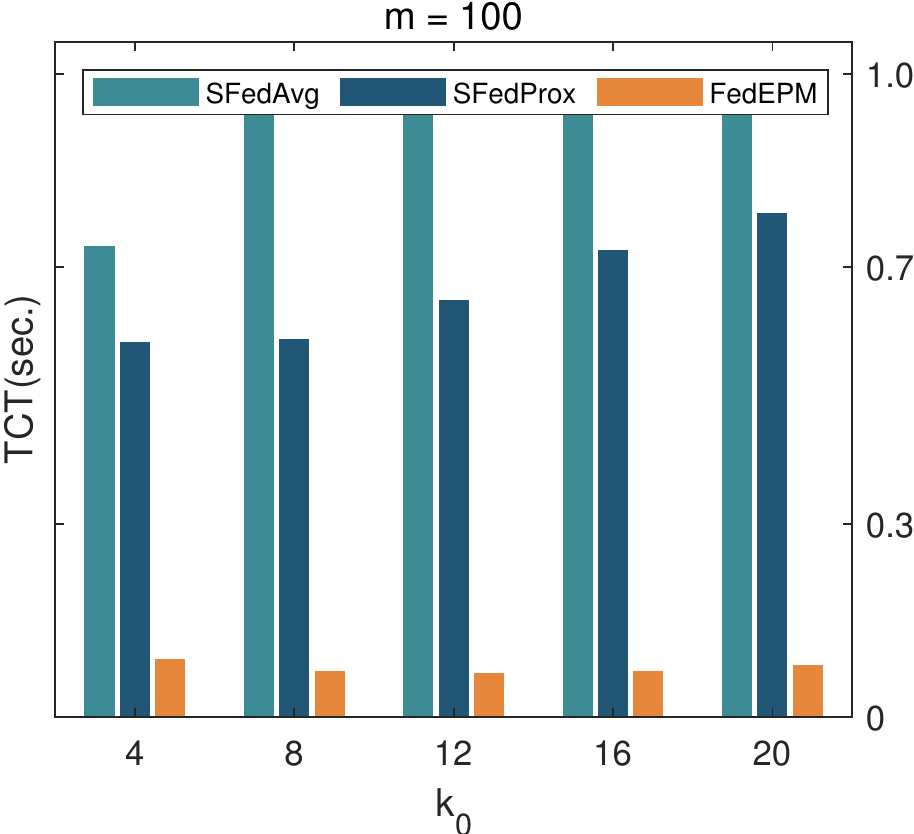}
\end{subfigure}   
\caption{Effect of $k_0$.\label{fig:k0-tct}}
\end{figure}

\begin{table}[!th]
	\renewcommand{\arraystretch}{1.25}\addtolength{\tabcolsep}{-4pt}
	\caption{LCT v.s. $k_0$.}\vspace{-3mm}
	\label{tab:k0-lct}
	\begin{center}
		\begin{tabular}{ccccccccc }
			\hline
	&	\multicolumn{3}{c}{$m=50$}					&&	\multicolumn{3}{c}{$m=128$}					\\\cline{2-4}\cline{6-8}
$k_0$	&	\SFA	&	\SFP	&	\FE	&&	\SFA	&	\SFP	&	\FE	\\\cline{1-4}\cline{6-8}
4	&	0.0037 	&	0.0082 	&	0.0011 	&&	0.0058 	&	0.0109 	&	0.0019 	\\
8	&	0.0067 	&	0.0155 	&	0.0015 	&&	0.0097 	&	0.0198 	&	0.0024 	\\
12	&	0.0096 	&	0.0226 	&	0.0018 	&&	0.0132 	&	0.0279 	&	0.0030 	\\
16	&	0.0125 	&	0.0293 	&	0.0021 	&&	0.0172 	&	0.0359 	&	0.0036 	\\
20	&	0.0148 	&	0.0360 	&	0.0023 	&&	0.0213 	&	0.0457 	&	0.0041 	\\\hline
 		\end{tabular}
	\end{center}
	\vspace{-5mm}
\end{table}

\begin{figure*}[!th] 
	\centering
	\includegraphics[width=1.0\linewidth]{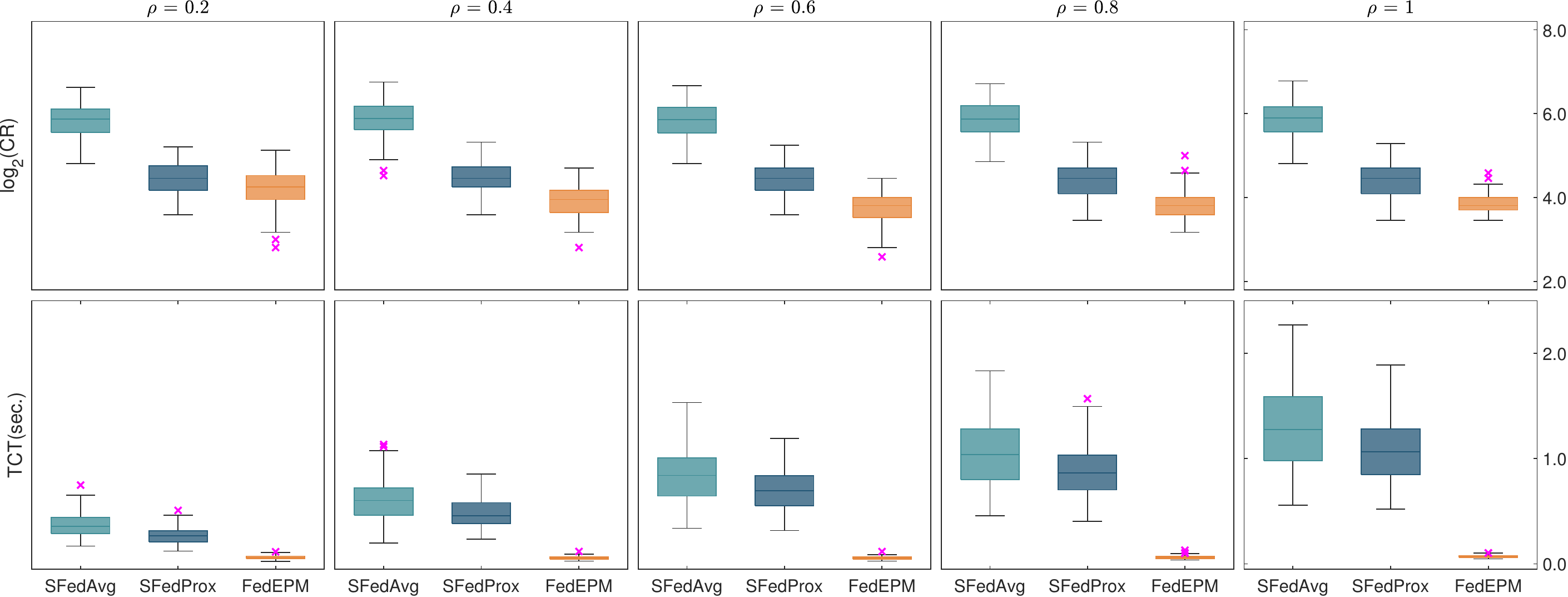} 
\caption{Effect of  $\rho$.\label{fig:rho-tct}}
\end{figure*}

 \begin{figure*}[!th] 
	\centering
	\includegraphics[width=1.\linewidth]{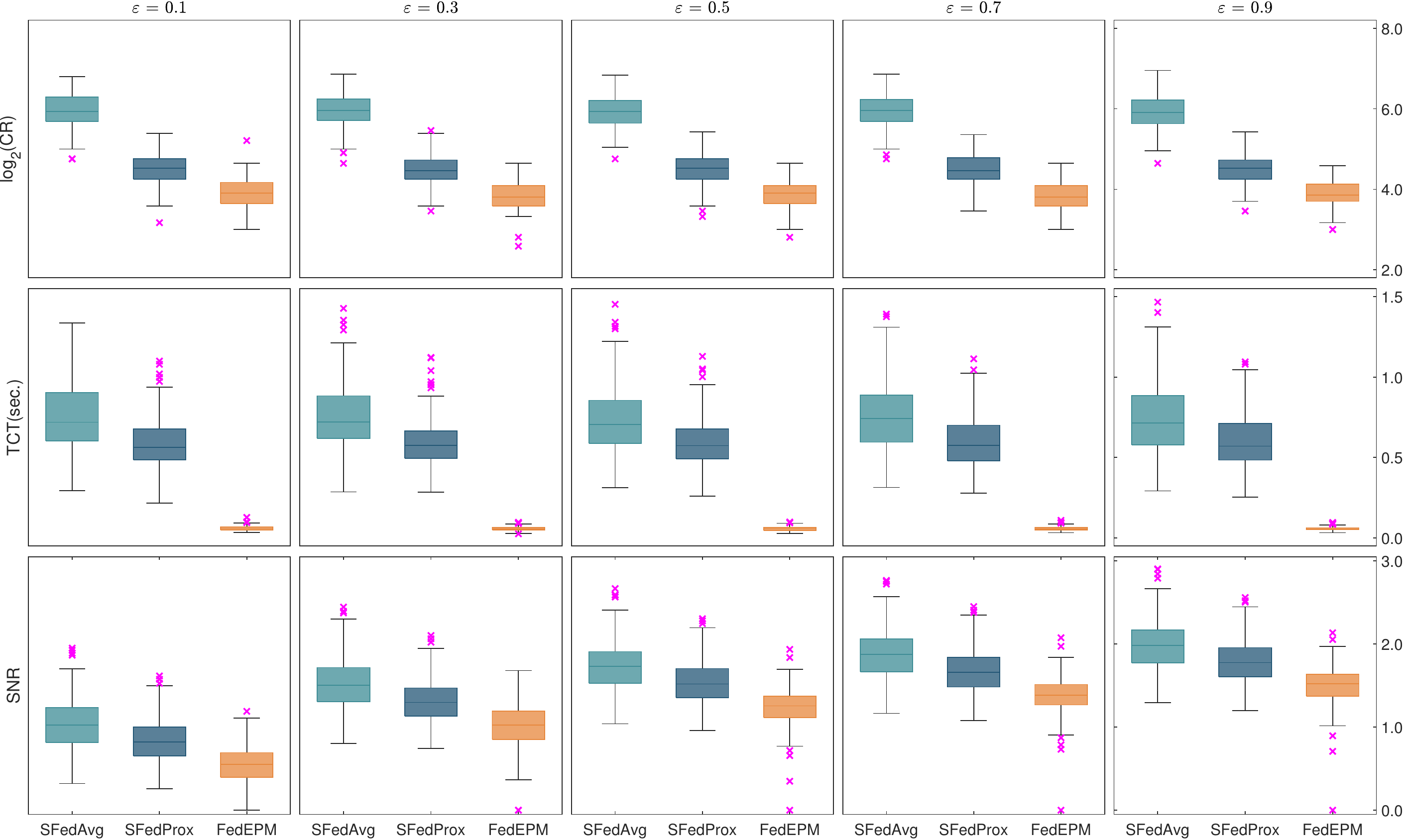} 
\caption{Effect of $\varepsilon$.\label{fig:var-tct}}
\end{figure*}

\subsubsection{Partial devices participation}To see the robustness of three algorithms to the number of selected devices per each round of communication, we fix $(\varepsilon,k_0,m)=(0.1,12, 50)$ but alter $\rho\in\{0.2,0.4,0.6,0.8,1.0\}$. For each fixed $(\varepsilon,\rho,m,k_0)$, we run 100 trails and draw box plots in Fig. \ref{fig:rho-tct}. The central line in each box indicates the median while the bottom and top edges of each box present the $25$th and $75$th percentiles, respectively. The outliers are plotted individually using the `x' symbols.  As expected, CR is slightly decreasing and TCT is increasing along with the rising of  $\rho$. It is evident that  \FE\ consumes the lowest median of CR and TCT, delivering the highest communication and computation efficiency. 

\subsubsection{Privacy preserving} For the aim of maintaining $\varepsilon$-differential privacy, we generate the noise as \eqref{noise-generation} where $\varepsilon$ has an impact on the scale of the added noise. The smaller $\varepsilon$ the larger noise, which leads to stronger privacy. To see this, we fix $(\rho,k_0,m)=(0.5,12,50)$ but alter $\varepsilon\in\{0.1,0.3,0.5,0.7,0.9\}$. For each fixed $(\varepsilon,\rho,m,k_0)$, again we run 100 trails and draw box plots in Fig. \ref{fig:var-tct}. One can observe that the change of $\varepsilon$ does not impact CR and TCT greatly. However, it can be clearly seen that SNR is increasing with $\rho$ ascending, thereby weakening privacy.  When making comparison among the three algorithms, \FE\ outperforms the other two once again. In particularly, it can maintain higher privacy than the others since it achieves the smallest SNR.

\section{Conclusion}
We leveraged the exact penalty method to reformulate the FL problem in a centralized form, which was effectively solved by the alternating direction method. The framework of the proposed algorithm was flexible to integrate tactics to improve communication and computation efficiency, eliminate the stragglers' effect, and preserve clients' privacy. These were testified by numerical comparisons with two state-of-the-art algorithms. Thanks to its abilities, it might be relatively practical for many applications. In addition,  we feel that the exact penalty may also be beneficial for processing de-centralized FL problems.  We leave these as the future research.


\appendix
  
 \section*{Proof  of Lemma \ref{lemma-solution-EN}}
\begin{proof} Problem \eqref{x-s-*} is equivalent to 
\begin{eqnarray}\label{x-s-*-down}
 \eqspace{1.5} 
 \begin{array}{rcll}
w(s^*) &=& {\rm argmin}_w \sum_{i=1}^m (\lambda|w-w_i^{\downarrow}| + \frac{\eta}{2}(w-w_i^{\downarrow})^2).
\end{array}\end{eqnarray} 
A point $w^*$ is the optimal solution  to \eqref{x-s-*-down} if and only is it satisfies the following optimality condition,
\begin{eqnarray}\label{x-s-*-opt}
 \eqspace{1.5} 
 \begin{array}{lll}
0&=&  \sum_{i=1}^m (\lambda \pi_i^* +  \eta (w^*-w_i^{\downarrow})),\\
 \pi_i^* &\in& \sgn(w^*-w_i^{\downarrow}).
\end{array}\end{eqnarray} 
If $w^*>w^\downarrow_1$, then  $w^*{-}w_i^{\downarrow}> 0$ and $ \pi_i^*=1$ for all $i\in[m] $, thereby $\lambda \pi_i^* +  \eta (w^*-w_i^{\downarrow})>0$ for all $i\in[m] $, which contradicts with the first condition in \eqref{x-s-*-opt}. Similarly, we conclude that $w^*<w^\downarrow_m$ does not hold.  Overall, $w^*\in[w^\downarrow_m,w^\downarrow_1]$. That is, one of the following two cases is valid.
\begin{itemize}
\item[c1)] there is an $s^*\in[1,m]$ such that \begin{eqnarray}\label{x-s-*-opt-w}
 \eqspace{1.5} 
 \begin{array}{lll}
w_{1}^{\downarrow} \geq \cdots \geq w_{s^*}^{\downarrow}> w^*
> w_{s^*+1}^{\downarrow}\geq \cdots \geq w_{m}^{\downarrow}.
\end{array}\end{eqnarray}
\item[c2)]  there is an $s^*\in[1,m]$ such that $w^*=w_{s^*}^{\downarrow}$.
\end{itemize}
If case c1) is true,  then  
\begin{eqnarray} \label{pi-s-1-1}
\pi_i^* =\left\{  \begin{array}{rl}
-1,~~&   i=1,2,\cdots,s^*,\\ 
1,~~&   i=s^*+1,\cdots,m,
\end{array}
\right.
\end{eqnarray} 
 due to \eqref{x-s-*-opt-w} and 
\begin{eqnarray*} 
 \eqspace{1.5} 
 \begin{array}{rl} \sum_{i=1}^m\pi_i^*= -s^*+(m-s^*)=m-2s^*.
\end{array}
\end{eqnarray*} 
Using this fact, we can obtain  the desired result due to
\begin{eqnarray*}
 \eqspace{1.75} 
 \begin{array}{rcll}
 w^*&\overset{ \eqref{x-s-*-opt}}{=}&  
 \frac{1}{m} \sum_{i=1}^m  w_i^{\downarrow}-\frac{\lambda}{\eta m}\sum_{i=1}^m    \pi_i^*\\
&=&  \frac{1}{m}\sum_{i=1}^m w_i  + \frac{\lambda (2s^*-m)}{ \eta m }.
\end{array}\end{eqnarray*}
If case c2) is true, we can find the optimal solution by 
\begin{eqnarray*}
 \eqspace{1.5} 
 \begin{array}{rcll}
w^*=\underset{w\in\{w_1^{\downarrow},\cdots,w_m^{\downarrow}\}}{\rm argmin}\sum_{i=1}^m (\lambda|w-w_i^{\downarrow}| + \frac{\eta}{2}(w-w_i^{\downarrow})^2).
\end{array}\end{eqnarray*} 
The whole proof is completed.
\end{proof}

\section*{Proof  of Theorem \ref{the-privacy}}
\subsection{A useful lemma}
 To derive Theorem \ref{the-privacy}, we need the following lemma.
  \begin{lemma} For any given $a>0$, it hold
\begin{eqnarray}  \label{Lip-continuity-sfot}
 \eqspace{1.5}  \begin{array}{llll}
|\soft(t,a)-\soft(t',a)|\leq 2|t-t'|.
 \end{array}
\end{eqnarray} 
\end{lemma}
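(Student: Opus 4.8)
The plan is to exploit the fact that soft-thresholding is a shrinkage of the identity by the projection onto a bounded interval, so that the desired bound follows from the triangle inequality together with the non-expansiveness of that projection. Concretely, I would first record the identity $\soft(t,a) = t - \Pi(t)$, where $\Pi(t) := \max(-a,\min(a,t))$ is the projection of $t$ onto the interval $[-a,a]$. This identity is verified by matching the three branches of the definition in \eqref{def-soft-1d}: for $t>a$ one has $\Pi(t)=a$, so $t-\Pi(t)=t-a$; for $|t|\le a$ one has $\Pi(t)=t$, so $t-\Pi(t)=0$; and for $t<-a$ one has $\Pi(t)=-a$, so $t-\Pi(t)=t+a$.

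With this decomposition in hand, the second step is a one-line estimate. For any $t,t'$,
\begin{equation*}
|\soft(t,a)-\soft(t',a)| = |(t-t') - (\Pi(t)-\Pi(t'))| \le |t-t'| + |\Pi(t)-\Pi(t')|.
\end{equation*}
The third step invokes the standard fact that the projection onto a nonempty closed convex set (here the interval $[-a,a]\subset\R$) is non-expansive, i.e. $|\Pi(t)-\Pi(t')|\le |t-t'|$; for an interval this is also immediate from the explicit clipping formula, since clipping is monotone and $1$-Lipschitz. Substituting this bound yields $|\soft(t,a)-\soft(t',a)|\le 2|t-t'|$, which is exactly \eqref{Lip-continuity-sfot}.

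I would note that there is essentially no genuine obstacle here: the constant $2$ is not tight (soft-thresholding is in fact $1$-Lipschitz, being the proximal operator of the convex function $a|\cdot|$), and the factor of two is simply what the triangle inequality gives away after splitting off the projection term. An alternative, entirely elementary route avoids the projection language and argues by cases on the positions of $t$ and $t'$ relative to $\pm a$, reducing to $t\ge t'$ without loss of generality via the symmetry $\soft(-t,a)=-\soft(t,a)$; each of the finitely many cases yields a difference bounded by $|t-t'|$, but this is more tedious and less transparent than the decomposition argument, so I would present the latter as the main proof. The only point deserving care along the case-analysis route is making the split exhaustive, which the projection identity sidesteps cleanly.
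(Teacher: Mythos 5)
Your proof is correct, but it takes a genuinely different route from the paper. The paper proves the lemma by brute-force case analysis: it tabulates the nine configurations of $(t,t')$ relative to the thresholds $\pm a$ (reducing to six by symmetry) and bounds $|\soft(t,a)-\soft(t',a)|$ separately in each one. Your argument replaces all of that with the single identity $\soft(t,a)=t-\Pi(t)$, where $\Pi$ is the projection onto $[-a,a]$, followed by the triangle inequality and the $1$-Lipschitz property of clipping; this is shorter, handles all cases uniformly, and avoids the exhaustiveness bookkeeping that is the only delicate point in the case-analysis route. Your side remark that the constant $2$ is not tight is also correct and worth noting: $\soft(\cdot,a)$ is the proximal operator of the convex function $a|\cdot|$ and hence non-expansive, so the lemma in fact holds with constant $1$. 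Since this lemma feeds directly into the sensitivity bound \eqref{gap-xi-k} in the proof of Theorem \ref{the-privacy}, where it produces the quantity $\varrho=2\Delta_i^{k+1}/\mu_{i,k+1}$ that calibrates the Laplace noise scale, the sharper constant would let the injected noise be halved for the same privacy budget $\varepsilon$ --- a small but concrete payoff of your cleaner argument.
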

\begin{proof} It follows from \eqref{def-soft-1d} that \begin{eqnarray} \label{def-soft-1d-0}
 \eqspace{1.25}
\soft(t,a) =\left\{ \begin{array}{lcrr}
 t-a, ~~& t&>&a,\\  
 0, & |t|&\leq&a,\\  
 t+a, & t&<&-a. 
\end{array}\right.
\end{eqnarray}
There are 9 cases of $\delta:=|\soft(t,a)-\soft(t',a)|$ which are summarized in Table \ref{nine-cases}.
\begin{table}[!th]
	\renewcommand{\arraystretch}{1.25}\addtolength{\tabcolsep}{10pt}
	\caption{Nine cases of $|\soft(t,a)-\soft(t',a)|$.}\vspace{-3mm}
	\label{nine-cases}
	\begin{center}
		\begin{tabular}{cccc}
			\hline
 &$t'>a$&	 $|t'|\leq a$	&	$t'<-a$\\\hline
$t>a$&	C1	&	C2	 	&	C3 	\\
$|t|\leq a$& C7	&	C4	&	C5 	\\
$t<-a$ &	 C8	&	 C9	&	C6 	\\
\hline
 		\end{tabular}
	\end{center}
	\vspace{-5mm}
\end{table}
To prove the conclusion, we only show cases C1-C6 since cases C7, C8, C9 are similar to cases C2, C3, C5, respective.
 
 \begin{itemize}[leftmargin=*]
\item For cases C1 \& C6, by \eqref{def-soft-1d-0}, it has $\delta=|t-t'|$.
\item For case  C2, by \eqref{def-soft-1d-0}, it has 
\begin{eqnarray*} 
\delta =   t-a \leq t-|t'|=|t|-|t'| \leq |t-t'|.
\end{eqnarray*}
\item For case  C3, by \eqref{def-soft-1d-0}, it has 
\begin{eqnarray*} 
\delta =   |t-a -t'-a |\leq |t-t'| + 2a \leq 2 |t-t'|.
\end{eqnarray*}
\item For case  C4, by \eqref{def-soft-1d-0}, it has $\delta = 0 \leq  |t-t'|$.
\item For case  C5, by \eqref{def-soft-1d-0}, it has 
\begin{eqnarray*} 
\delta =   |t'+a |= -t'-a \leq -t'-|t|  \leq   |t-t'|.
\end{eqnarray*}
\end{itemize}
Overall, we show $\delta\leq 2|t-t'|$.
\end{proof}
  
  \subsection{Proof of Theorem \ref{the-privacy}}
 \begin{proof} We only need to focus on $k+1\in\K$ as there is no  noise generated when $k+1\notin\K$. First, by \eqref{iceadmm-sub2},  we denote 
 \begin{eqnarray}\label{tilde-x-D}
  \eqspace{1.5}
\begin{array}{lll}
\widetilde{\bx}_i^{k+1}(\D_i) &:=& \mu_{i,k+1} (\bx_i^k - \bx^{\tk})- \bg_i^{\tk}(\D_i),\\
   \bx^{k+1}_i(\D_i)  &:= &\bx^{\tk}-\soft (\widetilde{\bx}_i^{k+1}(\D_i), \lambda  )/(\eta+\mu_{i,k+1 }).
\end{array}\end{eqnarray}
 It follows from \eqref{iceadmm-sub4} that $\bz_i^{\tk}= \bx_i^{k+1}(\D_i) +  \bxi_i^{\tk}$ and thus
 \begin{eqnarray*}
 \arraycolsep=1.4pt\def\arraystretch{1.5}
  \begin{array}{lcl} 
(\bxi_i^{\tk})' &:=& \bz_i^{\tk}-\bx_i^{k+1}(\D'_i),\\       
&=&\bz_i^{\tk} - \bx_i^{k+1}(\D_i)+ \bx_i^{k+1}(\D_i) -\bx_i^{k+1}(\D'_i)    \\
&=&  \bxi_i^{\tk}+\bx_i^{k+1}(\D_i) -\bx_i^{k+1}(\D'_i),
   \end{array}
  \end{eqnarray*} 
  which results in  
     \begin{eqnarray}\label{gap-xi-k}
 \arraycolsep=1.4pt\def\arraystretch{1.5}
  \begin{array}{lcl}
&&  \|\bxi_i^{\tk}-(\bxi_i^{\tk})'\|_1 \\
&=& \|\bx_i^{k+1}(\D_i)-\bx_i^{k+1}(\D'_i)\|_1\\
&\overset{\eqref{tilde-x-D}}{=}& \frac{1 }{ \eta+\mu_{i,k+1 }}\| \soft (\widetilde{\bx}_i^{k+1}(\D_i), \lambda  ) - \soft (\widetilde{\bx}_i^{k+1}(\D'_i), \lambda  ) \|_1  \\
&\overset{\eqref{Lip-continuity-sfot}}{\leq}&  
\frac{2}{\eta+\mu_{i,k+1 }}\|\bg_i^{\tk}(\D_i)-\bg_i^{\tk}(\D'_i)\|_1\\
&\overset{\eqref{scale-para}}{\leq}&   \frac{2\Delta_i^{k+1}}{ \eta+\mu_{i,k+1 }} \leq  \frac{2\Delta_i^{k+1}}{ \mu_{i,k+1 }} =:\varrho .
\end{array}\end{eqnarray}
 We note from  \eqref{sample-noise} that  
     \begin{eqnarray}\label{decompose-P-x}
 \arraycolsep=1.4pt\def\arraystretch{1.75}
  \begin{array}{lcl} 
{\rm ln}\frac{\P(\bz^{\tk}|\D_i) }{\P(\bz^{\tk}|\D'_i) } &=& {\rm ln} \frac{\P(\bxi_i^{\tk}) }{\P((\bxi_i^{\tk})')} = \sum_{j=1}^n {\rm ln} \frac{\P(\epsilon_{ij}^{\tk}) }{\P((\epsilon_{ij}^{\tk})')} \\
&\overset{\eqref{sample-noise}}{=}&  \sum_{j=1}^n \frac{ |(\epsilon_{ij}^{\tk})'| -|\epsilon_{ij}^{k+1}|}{\varrho/\varepsilon }\\
&\leq&\frac{ \|(\bxi_i^{\tk})'-\bxi_i^{k+1}\|_1}{\varrho/\varepsilon} \\
& \overset{\eqref{gap-xi-k}}{\leq}&  \varepsilon, 
   \end{array}
  \end{eqnarray} 
 showing the $\varepsilon$-differential privacy.
 \end{proof}
\section*{Proofs of theorems in Section \ref{sec:ADMMFL}}
\subsection{Some Useful Properties}
Suppose $f$ is Lipschitz continuous with $r>0$. Then for any $\bx_1, \bx_2  $, and $\bx(t)  := \bx_2  +t(\bx_1-\bx_2  ) $, where $t\in(0,1)$,  the Mean Value Theorem  suffices to, for any $\bx_i\in\{\bx_1,\bx_2\}$
\begin{eqnarray}  \label{H-Lip-continuity-fxy}
 \eqspace{1.5}  \begin{array}{llll}
&&f(\bx_1) - f(\bx_2    ) -\langle \nabla  f(\bx_i  ), \bx_1-\bx_2     \rangle \\
&=&  \int_0^1 \langle \nabla  f(\bx(t)) - \nabla  f(\bx_i ), \bx_1-\bx_2     \rangle dt  \\
&\leq&  \int_0^1 r\|\bx(t) - \bx_i\|\| \bx_1-\bx_2     \| dt\\
&=& \frac{r}{2}\| \bx_1-\bx_2     \|^2.
 \end{array}
\end{eqnarray}  
For any $t>0$ and vectors $\bx,\bz$, we have
\begin{eqnarray} \label{two-vecs}
 \arraycolsep=1.5pt\def\arraystretch{1.5}
 \begin{array}{lcl} 
 2\langle \bx , \bz\rangle &\leq &  t\|\bx\|^2 + (1/t)\|\bz\|^2.	
 \end{array}
\end{eqnarray}
For notational simplicity, hereafter, we denote
\begin{eqnarray} 
 \label{decreasing-property-0}   
 \eqspace{1.5}
 \begin{array}{llllll}
\triangle\bx^{\tk}&:=&\bx^{\tk}-\bx^{\k },~&\triangle\blx_i^{k+1}&:=&\bx^{k+1}_i-\bx^{\tk},~~\\
\triangle\bx_i^{k+1}&:=&\bx_i^{k+1}-\bx_i^{k},~~~&
\bg_i^{k+1}&:=&\nabla f_i(\bx_i^{k+1}),
    \end{array}
 \end{eqnarray} 
 and let   $\bx^{k} \rightarrow \bx$ stand for $\lim_{k\rightarrow\infty} \bx^{k} = \bx$.  Before we prove all theorems, we claim some useful facts.
\begin{lemma} For any $i\in[m]$ and at $(k+1)$th iteration, let $k_i$ be the largest integer in $[-1,k]$ such that $i\in \S^{\tau_{k_i+1}}$. Then we have the following statements.
 \begin{itemize} 
 \item[i)] For every $i\in[m]$ and $k\geq 0$,
\begin{eqnarray}
\label{iceadmm-sub4-all}  
\bz^{\tk}_i = \bx_i^{k+1}  + \bze_i^{\tk}
\end{eqnarray}
where
  \begin{eqnarray} \label{def-zeta-ik}
\eqspace{1.5}
\bze_i^{\tau_{k+1}}:=\left\{
\begin{array}{lll}
 \bxi_i^{\tau_{k+1}},& i\in \S^{\tk},\\
 \bxi_i^{\tau_{k_i+1}},&i\notin \S^{\tk}, ~k_i \geq0,\\
 \bxi_i^{0},&i\notin \S^{\tk}, ~k_i=-1.
\end{array}\right.
\end{eqnarray}
\item[iii)] Under Setup \ref{setup-D-1}, for any  $k\in\K$, it has
  \begin{eqnarray} \label{bd-zeta-ik}
\eqspace{1.5} 
\begin{array}{lll}
\E\varphi(\bze_i^{\tau_{k+1}})\leq  \phi_{i,k} -  \phi_{i,k+1}. 
\end{array} 
\end{eqnarray} 

\end{itemize}
\end{lemma}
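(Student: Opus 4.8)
The plan is to prove the two stated items separately: item i) by tracking, through the iterations of Algorithm \ref{algorithm-ICEADMM}, which noise vector is ``frozen'' into the stored quantity $\bz_i^{\tk}$, and item iii) by computing the expected elastic-net value of a Laplace noise vector and then forcing it into a telescoping form using the bounded-selection-gap property.

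For item i), I would argue by induction on $k$ (equivalently, on the communication-round index $\tau_{k+1}=\lfloor(k+1)/k_0\rfloor$). The base case is the initialization step, where every client uploads $\bz_i^0=\bx_i^0+\bxi_i^0$; since no client has yet been selected, $k_i=-1$ and $\bze_i^0=\bxi_i^0$, matching the third branch of \eqref{def-zeta-ik}. For the inductive step I would distinguish the two ways $\bz_i^{\tk}$ can be produced. If $i\in\S^{\tk}$, then \eqref{iceadmm-sub4} gives $\bz_i^{\tk}=\bx_i^{k+1}+\bxi_i^{\tk}$ directly, so $\bze_i^{\tk}=\bxi_i^{\tau_{k+1}}$, the first branch. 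If $i\notin\S^{\tk}$, then \eqref{iceadmm-sub5} freezes both the parameter and the stored vector, $(\bx_i^{k+1},\bz_i^{\tk})\equiv(\bx_i^{k},\bz_i^{\k})$; applying the induction hypothesis to $\bz_i^{\k}=\bx_i^{k}+\bze_i^{\k}$ and noting that the index $k_i$ of the last selection is unchanged, I would conclude $\bze_i^{\tk}=\bze_i^{\k}$, which is either $\bxi_i^{\tau_{k_i+1}}$ or $\bxi_i^{0}$ according to whether $i$ has ever been selected. This reproduces the second and third branches of \eqref{def-zeta-ik}.

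For item iii), the key consequence of item i) is that $\bze_i^{\tk}$ is always one fixed Laplace vector $\bxi_i^{\tau}$, sampled as in Setup \ref{setup-xi-1} at some past iteration $k^\dagger$ with scale parameter $\Delta_i^{\tau}/(\varepsilon\mu_{i,k^\dagger+1})$. Using the density \eqref{def-Laplace}, each coordinate has $\E|\epsilon|=2\Delta_i^{\tau}/(\varepsilon\mu_{i,k^\dagger+1})$ and $\E\epsilon^2=8(\Delta_i^{\tau})^2/(\varepsilon\mu_{i,k^\dagger+1})^2$, so summing over the $n$ coordinates gives
\begin{eqnarray*}
\E\varphi(\bze_i^{\tk}) = \frac{2n\lambda\Delta_i^{\tau}}{\varepsilon\mu_{i,k^\dagger+1}} + \frac{4n\eta(\Delta_i^{\tau})^2}{(\varepsilon\mu_{i,k^\dagger+1})^2}.
\end{eqnarray*}
I would then bound $\Delta_i^{\tau}\le\Delta_i^{\infty}$ and, from the update of $\mu_{i,k+1}$ in \eqref{iceadmm-sub2}, $\mu_{i,k^\dagger+1}\ge\mu_{i,0}\alpha_i^{k^\dagger+1}$. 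Finally, the bounded-gap property \eqref{scheme-omega-2T} guaranteed by Setup \ref{setup-D-1} ensures that the last selection of client $i$ occurred no more than $2s_0$ rounds, i.e.\ $2s_0k_0$ iterations, before the current one, so $k^\dagger\ge k-2s_0k_0$ and hence $\alpha_i^{-(k^\dagger+1)}\le\alpha_i^{2s_0k_0}\alpha_i^{-(k+1)}$. Substituting these bounds and comparing term-by-term with the explicit difference $\phi_{i,k}-\phi_{i,k+1}$, which after the geometric telescoping $\alpha_i^{-k}-\alpha_i^{-(k+1)}=(\alpha_i-1)\alpha_i^{-(k+1)}$ (and the analogous squared identity) equals $\tfrac{4n\lambda\Delta_i^\infty\alpha_i^{2s_0k_0}}{\varepsilon\mu_{i,0}\alpha_i^{k+1}}+\tfrac{8n\eta(\Delta_i^\infty)^2\alpha_i^{4s_0k_0}}{(\varepsilon\mu_{i,0})^2\alpha_i^{2k+2}}$, yields the claim with exactly a factor-of-two margin in each term.

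The main obstacle I anticipate is the careful bookkeeping in item i): correctly identifying the frozen noise across the intertwined updates \eqref{iceadmm-sub4}–\eqref{iceadmm-sub5} together with the round/iteration translation $\tau_k=\lfloor k/k_0\rfloor$, and then in item iii) pinning down the generation iteration $k^\dagger$ so that the gap bound \eqref{scheme-omega-2T} applies and the exponents line up precisely with those appearing in the definition of $\phi_{i,k}$ in \eqref{constants}. The Laplace moment computation and the telescoping itself are routine once these indices are handled correctly.
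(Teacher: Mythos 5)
Your proposal follows essentially the same route as the paper's proof: item i) is the same bookkeeping of the noise frozen by \eqref{iceadmm-sub5} since the last selection (you phrase it as induction, the paper via the invariance of $(\bx_i^{\ell+1},\bz_i^{\tau_{\ell+1}})$ for $\ell\ge k_i$), and item iii) uses the identical chain of bounds $\Delta_i^{\tau}\le\Delta_i^{\infty}$, $\mu_{i,k^\dagger+1}\ge\mu_{i,0}\alpha_i^{k^\dagger+1}$, the gap estimate $k^\dagger> k-2s_0k_0$ from \eqref{scheme-omega-2T}, and the geometric telescoping of $\phi_{i,k}$. The only discrepancy is your Laplace moments ($2\nu$ and $8\nu^2$ versus the paper's $4\nu$ and $16\nu^2$, stemming from the normalization of the density \eqref{def-Laplace}), which only makes your bound tighter and does not affect the conclusion.
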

\begin{proof} i) The definition of $k_i$ implies that client $i$ is not selected in all $\S^{\tau_{k_i+2}}, \S^{\tau_{k_i+3}} \cdots ,\S^{\tk}$, which by \eqref{iceadmm-sub5}  yields
 \begin{eqnarray}\label{par-invariance}
\eqspace{1.5}
\begin{array}{rcl}
( \bx^{\ell+1}_i,  \bz^{\tau_{\ell+1}}_i) \equiv (  \bx^{k_i+1}_i,   \bz^{\tau_{k_i+1}}_i), \forall \ell =  k_i,k_i+1, \cdots ,k.
\end{array} \end{eqnarray}  
For any client $i\in \S^{\tk}$, we have \eqref{iceadmm-sub4}. For any client $i\notin \S^{\tk}$, if ${k_i   }\geq0$,  then $( \bx^{k_i+1}_i,  \bz^{k_i+1}_i)$ also satisfies \eqref{iceadmm-sub4} due to $i\in \S^{\tau_{k_i+1}}$, which by condition \eqref{par-invariance} implies that 
 \begin{eqnarray*}  
~~\bz^{\tk}_i \overset{\eqref{par-invariance}}{=} \bz^{\tau_{k_i+1}}_i \overset{\eqref{iceadmm-sub4} }{=} \bx_i^{k_i+1}  + \bxi_i^{\tau_{k_i+1}} \overset{\eqref{par-invariance}}{=} \bx_i^{k+1}  + \bxi_i^{\tau_{k_i+1}}. 
\end{eqnarray*}
If $k_i=-1$, this means that is client $i$ has never been selected. Then by \eqref{par-invariance} and our initialization, we have  \begin{eqnarray*} 
\eqspace{1.25}
\begin{array}{lcl}\bz^{\tk}_i\overset{\eqref{par-invariance}}{=}\bz^{\tau_{k_i+1}}_i=\bz^{0}_i =\bx_i^{0} +\bxi_i^{0} \overset{\eqref{par-invariance}}{=} \bx_i^{k+1} +\bxi_i^{0}.\end{array} \end{eqnarray*} 
ii) Under Setup \ref{setup-D-1}, we can observe that $\tau_{k+1} - \tau_{k_i+1} < 2s_0$ from \eqref{scheme-omega-2T}, 
which by $k\in\K$ implies $\tau_{k} =   \tau_{k+1}-1 < \tau_{k_i+1} +2s_0-1 \leq  \tau_{k_i}+  2s_0$, thereby leading to
\begin{eqnarray}\label{k-ki-2s0}
\eqspace{1.5}
\begin{array}{lcl}
 \tau_{k_i} > \tau_{k}  -2s_0   = \tau_{(k -2s_0k_0)}.
\end{array} \end{eqnarray} 
This indicates
\begin{eqnarray}\label{k-ki-2s1}
\eqspace{1.5}
\begin{array}{lcl}
 k_i  >  k  -2s_0k_0.
\end{array} \end{eqnarray} 
We note that for any $\epsilon\sim{\rm Lap}(0,\nu)$, it holds
  \begin{eqnarray} \label{Ex-xi1-xi2}
\eqspace{1.5}
\begin{array}{lclcl}
\E  | \epsilon| = 4 \nu,\qquad  \E  \epsilon^2 = 16 \nu^2.\end{array} \end{eqnarray}
Now we denote three constants for notational simplicity,   
\begin{eqnarray}  \label{nu-decreasing-0}
\eqspace{1.5}
\begin{array}{lclcl}
\nu_k:=\frac{ \Delta_i^{\k}}{\varepsilon \mu_{i, k} },~~\nu_k^{\infty}:=\frac{ \Delta_i^{\infty}}{\varepsilon  \mu_{i, k} },~~u_k^{\infty}:=\frac{ \Delta_i^{\infty}\alpha_{i}^{2s_0k_0}}{ \varepsilon \mu_{i, 0}\alpha_{i}^k}.
\end{array} \end{eqnarray} 
To estimate $\E \varphi(\bze_i^{\tau_{k+1}})$, there are three cases due to \eqref{def-zeta-ik}. 

 \begin{itemize}[leftmargin=*]
 \item \underline{Case $i\in\S^{\tk}$.}   Since $ \bxi_i^{\tau_{k+1}}$ is sampled as (\ref{sample-noise}), it follows
 \begin{eqnarray} \label{Ex-xi-ik}
\eqspace{1.5}
\begin{array}{lclcl}
\E \varphi(\bxi_i^{\tau_{k+1}})
&=&\sum_{j=1}^n  (\lambda  \E |\epsilon_{ij}^{\tau_{k+1}}| +  \frac{\eta}{2} \E |\epsilon_{ij}^{\tau_{k+1}}|^2)\\
& \overset{\eqref{Ex-xi1-xi2}}{=}& 
\sum_{j=1}^n ( 4\lambda\nu_{k+1} + 8 \eta \nu_{k+1}^2)\\
& =&  4n\lambda\nu_{k+1} + 8n \eta \nu_{k+1}^2\\
& \overset{\eqref{scale-para}}{ \leq}&   4n\lambda\nu_{k+1}^\infty + 8n \eta (\nu_{k+1}^\infty)^2.\end{array} \end{eqnarray} 
By \eqref{iceadmm-sub2}, $\mu_{i,k+1} \geq  \mu_{i,0}\alpha_i^{k+1}$ and $\alpha_i>1$, there is  
\begin{eqnarray} \label{nu-decreasing}
\eqspace{1.5}
\begin{array}{lclcl}
 \nu_{k+1}^{\infty}\overset{\eqref{nu-decreasing-0}}{=}\frac{ \Delta_i^{\infty}}{ \varepsilon \mu_{i,k+1}  } =\frac{ \Delta_i^{\infty}}{ \varepsilon \mu_{i,0}  \alpha_{i}^{ k+1} } \overset{\eqref{nu-decreasing-0}}{\leq} u_{k+1}^{\infty}.
\end{array} \end{eqnarray} 
The above two facts  result in
\begin{eqnarray} \label{Ex-xi-ik-1}
\eqspace{1.75}
\begin{array}{lcl}
\E \varphi(\bze_i^{\tau_{k+1}}) &\overset{\eqref{def-zeta-ik}}{ =}& \E \varphi(\bxi_i^{\tau_{k+1}})\\
 &  \overset{\eqref{Ex-xi-ik}}{ \leq}&   4n\lambda\nu_{k+1}^\infty + 8n \eta (\nu_{k+1}^\infty)^2 \\
&  \overset{\eqref{nu-decreasing}}{ \leq}&   4n\lambda u_{k+1}^{\infty} + 8n \eta (u_{k+1}^{\infty})^2.
\end{array} \end{eqnarray} 

\item  \underline{Case $i\notin\S^{\tk}, k_i\geq0$.}  We can check that
\begin{eqnarray} \label{nu-decreasing-1}
\eqspace{1.75}
\begin{array}{lclcl}
 \nu_{k_i+1}^{\infty} &\overset{\eqref{nu-decreasing-0}}{=}& \frac{ \Delta_i^{\infty}}{\varepsilon\mu_{i, k_i+1}}=\frac{ \Delta_i^{\infty}}{ \varepsilon \mu_{i,0}  \alpha_{i}^{k_i+1}}\\
 & \overset{\eqref{k-ki-2s1}}{\leq}&  \frac{ \Delta_i^{\infty}}{ \varepsilon \mu_{i,0}  \alpha_{i}^{k+1-2s_0k_0}}\overset{\eqref{nu-decreasing-0}}{=} u_{k+1}^{\infty}.
\end{array} \end{eqnarray} 
Similar to prove \eqref{Ex-xi-ik}, we can obtain
\begin{eqnarray*} 
\eqspace{1.75}
\begin{array}{lcl}
\E \varphi(\bze_i^{\tau_{k+1}}) &\overset{\eqref{def-zeta-ik}}{ =}&\E \varphi(\bxi_i^{\tau_{k_i+1}})\\
& \overset{\eqref{Ex-xi-ik}}{\leq}& 4n\lambda\nu_{k_i+1}^\infty + 8n \eta (\nu_{k_i+1}^\infty)^2 \\
&  \overset{\eqref{nu-decreasing-1}}{ \leq}&   4n\lambda u_{k+1}^{\infty} + 8n \eta (u_{k+1}^{\infty})^2.
\end{array} \end{eqnarray*} 
\item  \underline{Case $i\notin\S^{\tk}, k_i=-1$.}  Under such a case, condition \eqref{k-ki-2s0} means that $k<2s_0k_0$, leading to $k+1\leq 2s_0k_0$. Moreover,
\begin{eqnarray*} 
\eqspace{1.75}
\begin{array}{lcl}
 \nu_{0}^\infty  = \frac{ \Delta_i^{\infty}}{\varepsilon \mu_{i,0}}   \leq   \frac{ \Delta_i^{\infty}\alpha_{i}^{2s_0k_0}}{ \varepsilon \mu_{i,0}  \alpha_{i}^{k+1}} =u_{k+1}^{\infty}.
\end{array} \end{eqnarray*} 
 Then the same reasoning enables us to derive
\begin{eqnarray*} 
\eqspace{1.75}
\begin{array}{lcl}
\E \varphi(\bze_i^{\tau_{k+1}}) &\overset{\eqref{def-zeta-ik}}{ =}&\E \varphi(\bxi_i^{0})\\
 & \overset{\eqref{Ex-xi-ik}}{\leq}& 4n\lambda\nu_{0}^\infty + 8n \eta (\nu_{0}^\infty)^2 \\
   &\leq& 4n\lambda u_{k+1}^{\infty} + 8n \eta (u_{k+1}^{\infty})^2.
\end{array} \end{eqnarray*} 
\end{itemize}
Hence, the above three cases and \eqref{def-zeta-ik} allows us to obtain
 \begin{eqnarray*} 
\eqspace{1.75} 
\begin{array}{lll}
\E\varphi(\bze_i^{\tau_{k+1}}) &\leq& 4n\lambda u_{k+1}^{\infty} + 8n \eta (u_{k+1}^{\infty})^2\\
&=&\frac{4n\lambda\Delta_i^{\infty}\alpha_{i}^{2s_0k_0}}{\varepsilon \mu_{i,0}  \alpha_{i}^{ k+1}} + \frac{8n\eta(\Delta_i^\infty\alpha_{i}^{2s_0k_0})^2}{(\varepsilon \mu_{i,0})^2  \alpha_{i}^{2( k+1)}}\\
& =& \frac{4n\lambda\Delta_i^{\infty} \alpha_{i}^{2s_0k_0}}{\varepsilon \mu_{i,0} }\Big(\frac{1}{\alpha_i-1}\frac{1}{ \alpha_{i}^{k}} -\frac{1}{\alpha_i-1}\frac{1}{ \alpha_{i}^{k+1 }} \Big)\\
&+& \frac{8n\eta(\Delta_i^\infty\alpha_{i}^{2s_0k_0})^2}{(\varepsilon \mu_{i,0})^2  }\Big(\frac{1}{\alpha_i^2-1}\frac{1}{ \alpha_{i}^{2k  }} -\frac{1}{\alpha_i^2-1}\frac{1}{ \alpha_{i}^{2(k+1)}} \Big)\\
& =& \frac{4n\lambda\Delta_i^{\infty} \alpha_{i}^{2s_0k_0}}{\varepsilon \mu_{i,0}(\alpha_i-1)  }\Big( \frac{1}{ \alpha_{i}^{k}} - \frac{1}{ \alpha_{i}^{k+1 }} \Big)\\
&+& \frac{8n\eta(\Delta_i^\infty\alpha_{i}^{2s_0k_0})^2}{(\varepsilon \mu_{i,0})^2 (\alpha_i^2-1)  }\Big( \frac{1}{ \alpha_{i}^{2k  }} - \frac{1}{ \alpha_{i}^{2(k+1)}} \Big)\\
& =&   \phi_{i,k} -  \phi_{i,k+1}.
\end{array} 
\end{eqnarray*} 
 displaying the result.
  \end{proof}
  
   \subsection{Proof of Lemma \ref{lemma-decreasing-1}}   
\begin{proof} First, we can conclude that, if $f$ is a strongly convex function with a constant $r>0$, then for any $\bx$ it holds
  \begin{eqnarray} 
\label{strong-convex}  
\eqspace{1.5}
\begin{array}{lll}
 f(\bx) &\geq&  f(\bv) + \langle \nabla f(\bv), \bx-\bv \rangle  + \frac{r}{2}\|\bx-\bv\|^2\\
 &=&  f(\bv)  + \frac{r}{2}\|\bx-\bv\|^2,
    \end{array} 
 \end{eqnarray} 
 where $\bv={\rm argmin}_{\bx} f(\bx).$ To prove the resuls, we  aim to estimate the following item, 
  \begin{eqnarray} 
\label{three-cases}  
\begin{array}{l}
 F(\bx^{\tk},W^{k+1})-F(\bx^{\k },W^{k}) =: q_1^k+q_2^k,
    \end{array} 
 \end{eqnarray} 
where
   \begin{eqnarray}  \label{three-cases-sub}  
\eqspace{1.5} 
 \begin{array}{llllll}
q_1^k&:=&F(\bx^{\tk},W^{k})-F(\bx^{\k },W^{k}),\\
q_2^k&:=& F(\bx^{\tk},W^{k+1})-F(\bx^{\tk},W^{k}),
    \end{array} 
 \end{eqnarray} 
{\underline{Estimate  $q_1^k$.}} We first focus on case $k\in\K$. 
It is noted that $\varphi(\bz^{\k }_i-\cdot)$ is strongly convex with a constant $\eta$ and thus 
\begin{eqnarray}\label{strong-cconvexity}
\eqspace{1.5} 
 \begin{array}{llll}
&&\sum_{i=1}^m  (\varphi(\bz^{\k }_i-\bx^{\tk})-\varphi(\bz^{\k }_i-\bx^{\k}))\\
&\overset{\eqref{strong-convex}}{\leq} &    \sum_{i=1}^m  -\frac{\eta}{2}\|\triangle  \bx^{\tk}\|^2.
\end{array}
\end{eqnarray} 
Moreover, by $\bz^{\k }_i = \bx_i^k+\bze_i^{\k }$ from \eqref{iceadmm-sub4-all},  we have
 \begin{eqnarray*}   
\eqspace{1.5} 
 \begin{array}{lrl} 
t_1&:=& \|\bx_i^k-\bx^{\tk}\|_1- \|\bx_i^k-\bx^{\k }\|_1  \\
 &\leq &   \|\bz^{\k }_i- \bx^{\tk} \|_1 + \|  \bze_i^{\k } \|_1 - \|\bx_i^k-\bx^{\k }\|_1  \\
 &\leq & \|\bz^{\k }_i- \bx^{\tk} \|_1 - \|\bz^{\k }_i- \bx^{\k }\|_1 + 2\|  \bze_i^{\k } \|_1
    \end{array} 
 \end{eqnarray*}
 and
  \begin{eqnarray*}   
\eqspace{1.5} 
 \begin{array}{lrl} 
t_2&:=& \|\bx_i^k-\bx^{\tk}\|^2- \|\bx_i^k-\bx^{\k }\|^2  \\ 
 &=& \|\bz^{\k }_i -  \bze_i^{\k } -\bx^{\tk}   \|^2 - \| \bz^{\k }_i - \bze_i^{\k } -\bx^{\k }  \|^2\\
 &=& \|\bz^{\k }_i  -\bx^{\tk}  \|^2 - \|\bz^{\k }_i  -\bx^{\k } \|^2 +2 \langle \triangle  \bx^{\tk}, \bze_i^{\k } \rangle\\
 &\overset{\eqref{two-vecs}}{\leq}& \|\bz^{\k }_i  -\bx^{\tk}  \|^2 - \|\bz^{\k }_i  -\bx^{\k } \|^2 \\ 
 &+& \frac{1}{2}\|\triangle  \bx^{\tk}\|^2+ 2\| \bze_i^{\k }\|^2.
    \end{array} 
 \end{eqnarray*}
 The above two facts allow  us to derive that
 \begin{eqnarray}\label{gap-3-1*}    
\eqspace{1.75} 
 \begin{array}{lcl} 
q_1^k    &\overset{\eqref{FL-EPM}}{=}&    \sum_{i=1}^{m}    \varphi(\bx^{k }_i-\bx^{\tk}) - \varphi(\bx^{k }_i-\bx^{\k})\\
  &=&    \sum_{i=1}^{m}   ( \lambda t_1 +   \eta t_2/2) \\
 &=&    \sum_{i=1}^{m}    (\varphi(\bz^{\k }_i   -\bx^{\tk}  ) - \varphi(\bz^{\k }_i  -\bx^{k } ))    \\
 &+& \sum_{i=1}^{m}   ( 2\lambda \|  \bze_i^{\k } \|_1 + \frac{\eta}{4}\|\triangle  \bx^{\tk}\|^2+ \eta\| \bze_i^{\k }\|^2) \\ 
   &\overset{\eqref{strong-cconvexity}}{\leq}&     \sum_{i=1}^{m}   ( 2\lambda \|  \bze_i^{\k } \|_1 + \eta \|  \bze_i^{\k } \|^2 -\frac{\eta}{4}\|\triangle  \bx^{\tk}\|^2) \\ 
    &\overset{\eqref{Elastic}}{=}&     \sum_{i=1}^{m}   ( 2  \varphi( \bze_i^{\k } ) -\frac{\eta}{4}\|\triangle  \bx^{\tk}\|^2). \\ 
    \end{array} 
 \end{eqnarray} 
 If $k\notin\K$, then $\tk=\k $ and hence $q_1^k=0$, which means the last inequality in the above condition is still valid. 
 
\noindent{\underline{Estimate  $q_2^k$.}} We first consider any client $i\in\S^{\tk}$. Since $\bx_i^{k+1}$ is an optimal solution to \eqref{iceadmm-sub2-prob-1}, then it satisfies the following optimality condition,
\begin{eqnarray}\label{iceadmm-sub2-g-pi-u}
 \eqspace{1.5}
\begin{array}{lll} 
 \bg_i^{\tk} + \mu_{i,k+1} \triangle \bx_i^{k+1}+ \lambda \bu_i^{k+1}  + \eta \triangle \blx_i^{k+1} =0,
\end{array} 
\end{eqnarray}
where $\bu_i^{k+1}\in\sgn (\triangle \blx_i^{k+1})$. Next we estimate several terms. The gradient Lipschitz continuity of $f_i$ gives rise to
\begin{eqnarray*} 
 \eqspace{1.5}
\begin{array}{lll} 
t_3 :=  f_i( \bx_i^{k+1}) - f_i(\bx_i^k)    \overset{\eqref{H-Lip-continuity-fxy}}{\leq} & \langle \bg_i^{k}, \triangle \bx_i^{k+1} \rangle + \frac{r_i}{2}\|\triangle\bx_i^{k+1}\|^2.
\end{array} 
\end{eqnarray*}
Since $\bu_i^{k+1}\in\sgn (\triangle \blx_i^{k+1})$, it holds
\begin{eqnarray*} 
 \eqspace{1.5}
\begin{array}{lll} 
&&\langle   \bu_i^{k+1}, -\triangle\bx_i^{k+1}\rangle + \| \triangle \blx_i^{k+1}\|_1-\|\bx_i^{k}-\bx^{\tk}\|_1 \\
&=&   \langle   \bu_i^{k+1}, \triangle \blx_i^{k+1}-\triangle\bx_i^{k+1}\rangle  -\|\bx_i^{k}-\bx^{\tk}\|_1 \\ 
&=&   \langle  \bu_i^{k+1}, \bx_i^{k}-\bx^{\tk} \rangle  -\|\bx_i^{k}-\bx^{\tk}\|_1 \\
&\leq& 0,
\end{array} 
\end{eqnarray*}
where the last inequality  is due to $\bu_i^{k+1}\in\sgn (\triangle \blx_i^{k+1})$, which immediately results in
\begin{eqnarray*} 
 \eqspace{1.5}
\begin{array}{lll} 
 t_4:=  \| \triangle \blx_i^{k+1}\|_1-\|\bx_i^{k}-\bx^{\tk}\|_1 
 \leq  \langle   \bu_i^{k+1}, \triangle\bx_i^{k+1}\rangle.
\end{array} 
\end{eqnarray*}
Using this fact, we have
\begin{eqnarray*} 
 \eqspace{1.5}
\begin{array}{lll} 
t_5&:=& \varphi(\bx_i^{k+1}-\bx^{\tk}) - \varphi(\bx_i^k-\bx^{\tk})  \\
&=&\lambda t_4 + \frac{\eta}{2} (\|\bx_i^{k+1}-\bx^{\tk}\|^2-\|\bx_i^{k}-\bx^{\tk}\|^2)\\ 
&=&\lambda t_4 + \frac{\eta}{2} ( \| \triangle \blx_i^{k+1}\|^2-\|\triangle  \blx_i^{k+1}-\triangle\bx_i^{k+1}\|^2)\\
&\leq &  \langle \lambda \bu_i^{k+1} +  \eta \triangle  \blx_i^{k+1}, \triangle\bx_i^{k+1}\rangle  - \frac{\eta}{2} \| \triangle\bx_i^{k+1}\|^2. 
\end{array} 
\end{eqnarray*}
Moreover, direct calculations can verify that
\begin{eqnarray}\label{p2k-fact2-g-g}   
\eqspace{1.75} 
 \begin{array}{lcl}
 && \langle \bg_i^{k}- \bg_i^{\tk}, \triangle\bx_i^{k+1}\rangle  \\
 &\overset{\eqref{two-vecs}}{\leq}& \frac{1}{2\mu_{i,k+1}}\|\bg_i^{k}-\bg_i^{\tk}\|^2 + \frac{\mu_{i,k+1}}{2}\| \triangle\bx_i^{k+1}\|^2\\
  & {\leq}& \frac{r_i^2}{2\mu_{i,k+1}}\|\bx_i^{k}-\bx^{\tk}\|^2 + \frac{\mu_{i,k+1}}{2}\| \triangle\bx_i^{k+1}\|^2\\
  & \overset{\eqref{iceadmm-sub2}}{\leq}& \frac{r_i^2}{2\mu_{i,0}c_i\alpha_i^{k+1} } + \frac{\mu_{i,k+1}}{2}\| \triangle\bx_i^{k+1}\|^2.
    \end{array} 
 \end{eqnarray}
Combining the above facts,  we have
\begin{eqnarray}\label{p2k-fact2}   
\eqspace{1.75} 
 \begin{array}{lcl}
 &&F_i(\bx^{\tk},\bx_i^{k+1})-F_i(\bx^{\tk},\bx_i^{k}) \overset{\eqref{FL-EPM}}{=} t_3+t_5\\
 &\leq& \langle \bg_i^{k}{+} \lambda \bu_i^{k+1} {+}  \eta \triangle  \blx_i^{k+1}, \triangle\bx_i^{k+1}\rangle  {+} \frac{r_i-\eta}{2} \| \triangle\bx_i^{k+1}\|^2\\ 
  &\overset{\eqref{iceadmm-sub2-g-pi-u}}{=}& \langle \bg_i^{k}-\bg_i^{\tk}, \triangle\bx_i^{k+1}\rangle  + \frac{r_i-\eta-2\mu_{i,k+1 }}{2} \| \triangle\bx_i^{k+1}\|^2\\ 
  &\overset{\eqref{p2k-fact2-g-g}  }{\leq}&  \frac{r_i^2}{2\mu_{i,0}c_i\alpha_i^{k+1} } + \frac{r_i-\eta- \mu_{i,k+1 }}{2} \| \triangle\bx_i^{k+1}\|^2.
    \end{array} 
 \end{eqnarray}
For any client $i{\notin}\S^{\tk}$, it follows from \eqref{iceadmm-sub5} that $\bx_i^{k+1}{=}\bx_i^k$, which means the above condition is still valid. Overall, condition \eqref{p2k-fact2}  is true for any $i\in[m]$, thereby giving rise to
 \begin{eqnarray*} 
 \arraycolsep=1.5pt\def\arraystretch{1.5}
 \begin{array}{lcl}
 q_2^k & =&   \sum_{i=1}^{m}    (F_i(\bx^{\tk},\bx_i^{k+1})-F_i(\bx^{\tk},\bx_i^{k}) )  \\    
 & {\leq}& \sum_{i=1}^{m}   (  \frac{r_i^2}{2\mu_{i,0}c_i\alpha_i^{k+1} }   + \frac{r_i-\eta- \mu_{i,k+1 }}{2} \| \triangle\bx_i^{k+1}\|^2).
    \end{array} 
 \end{eqnarray*}
 Then, this condition, \eqref{gap-3-1*}, and \eqref{three-cases} allow  us to show  \eqref{decreasing-property-2}.
 
 We finally prove \eqref{decreasing-property-1}. For any client $i\in\S^{\tk}$,
\begin{eqnarray}\label{p2k-fact2-1}   
\eqspace{1.75} 
 \begin{array}{lcl}
 &&F_i(\bx^{\tk},\bx_i^{k+1})-F_i(\bx^{\tk},\bx_i^{k})\\ 
  &\overset{\eqref{p2k-fact2}}{\leq}&  \frac{r_i^2}{2\mu_{i,0}c_i\alpha_i^{k+1} }  + \frac{r_i-\eta- \mu_{i,k+1 }}{2} \| \triangle\bx_i^{k+1}\|^2\\
  &=&  \frac{r_i^2}{2\mu_{i,0}c_i (\alpha_i-1)} (\frac{1}{ \alpha_i^{k} } -\frac{1}{\alpha_i^{k+1}} ) +    \frac{r_i-\eta- \mu_{i,k+1 }}{2} \| \triangle\bx_i^{k+1}\|^2.
    \end{array} 
 \end{eqnarray}
Since $\bx_i^{k+1}=\bx_i^k$ for any client $i\notin\S^{\tk}$,  we can conclude that condition \eqref{p2k-fact2-1} is true for any $i\in[m]$, thereby yielding
 \begin{eqnarray*} 
 \arraycolsep=0.5pt\def\arraystretch{1.75}
 \begin{array}{lcl}
 q_2^k & =&   \sum_{i=1}^{m}    (F_i(\bx^{\tk},\bx_i^{k+1})-F_i(\bx^{\tk},\bx_i^{k}) )  \\    
 & {\leq}& \sum_{i=1}^{m}  \frac{r_i^2}{2\mu_{i,0}c_i (\alpha_i-1)} (\frac{1}{ \alpha_i^{k} } -\frac{1}{\alpha_i^{k+1}} )\\
 &  +&  \sum_{i=1}^{m}   \frac{r_i-\eta- \mu_{i,k+1 }}{2} \| \triangle\bx_i^{k+1}\|^2.
    \end{array} 
 \end{eqnarray*}
Combining the above condition, \eqref{gap-3-1*}, and \eqref{three-cases}, we can claim
  \begin{eqnarray*} \eqspace{1.75}
  \begin{array}{lcl}
&&\E_{\bxi} F(\bx^{\tk},W^{k+1})- \E_{\bxi} F(\bx^{\k },W^{k})\\ 
 &\leq& 
      \sum_{i=1}^{m}( \frac{r_i^2}{2\mu_{i,0}c_i (\alpha_i-1)} (\frac{1}{ \alpha_i^{k} } -\frac{1}{\alpha_i^{k+1}} )   +2  \E_{\bxi} \varphi( \bze_i^{\k } ) )\\
 &+&  \sum_{i=1}^{m}(  \frac{r_i-\eta- \mu_{i,k+1 }}{2} \E_{\bxi}\| \triangle\bx_i^{k+1}\|^2  -\frac{\eta}{4} \E_{\bxi} \|\triangle  \bx^{\tk}\|^2   )\\
&\overset{\eqref{bd-zeta-ik}}{\leq}& 
      \sum_{i=1}^{m}(\frac{r_i^2}{2\mu_{i,0}c_i (\alpha_i-1)} (\frac{1}{ \alpha_i^{k} } -\frac{1}{\alpha_i^{k+1}} )  +  2\phi_{i,k-1} -  2\phi_{i,k} )\\
 &+&   \sum_{i=1}^{m}(  \frac{r_i-\eta- \mu_{i,k+1 }}{2} \E_{\bxi}\| \triangle\bx_i^{k+1}\|^2  -\frac{\eta}{4}  \E_{\bxi} \|\triangle  \bx^{\tk}\|^2   ),  
    \end{array}    
    \end{eqnarray*}  
showing \eqref{decreasing-property-1} after the simple manipulation. 
\end{proof}
 
\subsection*{Proof of  Theorem \ref{global-obj-convergence-inexact}}   

 \begin{proof} i) Since $f_i$ is bounded from below and
 \begin{eqnarray*}    
 \eqspace{1.75}
 \begin{array}{lll}
t_{i,k}&:=& \frac{r_i^2}{2\mu_{i,0}c_i (\alpha_i-1)\alpha_i^{k}  }   +  2\phi_{i,k-1}\geq 0,\\
\L^{k}&=& \E_{\bxi} F(\bx^{\k},W^{k})+ \sum_{i=1}^m t_{i,k},
    \end{array}  \end{eqnarray*}  
 sequence $ \{\L^{k}\}$ is bounded from below. We note that under Setup \ref{setup-D-1}, for each $2s_0$ consecutive iterations, every client $i$ is selected to update their parameters at least once. Therefore,   $\mu_{i,k+1}$ in the form of \eqref{iceadmm-sub2} is increasing for every $i\in[m]$. This means there must be a $k'>0$ such that for all $k\geq k'$, it holds  $\mu_{i,k'+1}>r_i-\eta$, which allows us to derive that
\begin{eqnarray}    \label{gap-LK1-LK}
 \eqspace{1.5}
 \begin{array}{lll}
   \L^{k+1} - \L^{k}
  &\overset{\eqref{decreasing-property-1} }{\leq} &      \sum_{i=1}^{m} \frac{ r_i-\eta-\mu_{i,k+1}}{2}\E_{\bxi}  \|\triangle\bx_i^{k+1}\|^2 \\
  &-&  \sum_{i=1}^{m}  \frac{\eta}{4}  \E_{\bxi} \|\triangle  \bx^{\tk}\|^2  \\
    &\leq &  \sum_{i=1}^{m} \frac{ r_i-\eta-\mu_{i,k'+1}}{2}\E _{\bxi} \|\triangle\bx_i^{k+1}\|^2 \\
  &-&  \sum_{i=1}^{m}  \frac{\eta}{4} \E_{\bxi} \|\triangle  \bx^{\tk}\|^2  \\
  &\leq&  0,
    \end{array}  
 \end{eqnarray} 
for all $k\geq k'$. Hence, sequence $\{ \L^k\}$ converges. Suppose that $ \Delta_i^{\infty}  \alpha_{i}^{-k} { \to }\infty $, then   $\phi_{i,k}{\to} \infty$ by \eqref{constants} and so is $t_{i,k}$, yielding
\begin{eqnarray*}    
 \eqspace{1.5}
 \begin{array}{lll}
\infty > \L^1 \geq \lim_{k\to\infty} \L^{k} = \infty, 
    \end{array}  
 \end{eqnarray*} 
 which is a contradiction. Hence,  $ \Delta_i^{\infty}  \alpha_{i}^{-k} $ is bounded, and so is $t_{i,k}$. Again by \eqref{constants},  it follows $t_{i,k+1} \leq t_{i,k} /\alpha_i$ and thus
\begin{eqnarray*}    
 \eqspace{1.5}
 \begin{array}{lll}
 \frac{t_{i,k+1}}{t_{i,1}}= \Pi_{\ell=1}^k \frac{t_{i,\ell+1}}{t_{i,\ell}} \leq\frac{1}{\alpha_i^k},
    \end{array}  
 \end{eqnarray*}
 which results in $t_{i,k}\to 0$ and hence
 \begin{eqnarray*}    
 \eqspace{1.75}
 \begin{array}{lll}
\lim\limits_{k\to\infty}\L^{k}&=& \lim\limits_{k\to\infty}(\E_{\bxi} F(\bx^{\k},W^{k})+ \sum_{i=1}^m t_{i,k})\\
&=& \lim\limits_{k\to\infty} \E_{\bxi} F(\bx^{\k},W^{k}). 
    \end{array}  \end{eqnarray*}  
 
ii) Taking the limit on both sides of inequalities \eqref{gap-LK1-LK} yields $\E_{\bxi} \|\triangle\bx_i^{k+1}\|^2\to 0$ and $\E_{\bxi} \|\triangle  \bx^{\tk}\|^2\to 0$ as $k\to\infty$. 
%
%
\end{proof}


\bibliographystyle{IEEEtran}
\bibliography{ref}

\begin{thebibliography}{10}
\providecommand{\url}[1]{#1}
\csname url@samestyle\endcsname
\providecommand{\newblock}{\relax}
\providecommand{\bibinfo}[2]{#2}
\providecommand{\BIBentrySTDinterwordspacing}{\spaceskip=0pt\relax}
\providecommand{\BIBentryALTinterwordstretchfactor}{4}
\providecommand{\BIBentryALTinterwordspacing}{\spaceskip=\fontdimen2\font plus
\BIBentryALTinterwordstretchfactor\fontdimen3\font minus
  \fontdimen4\font\relax}
\providecommand{\BIBforeignlanguage}[2]{{%
\expandafter\ifx\csname l@#1\endcsname\relax
\typeout{** WARNING: IEEEtran.bst: No hyphenation pattern has been}%
\typeout{** loaded for the language `#1'. Using the pattern for}%
\typeout{** the default language instead.}%
\else
\language=\csname l@#1\endcsname
\fi
#2}}
\providecommand{\BIBdecl}{\relax}
\BIBdecl

\bibitem{konevcny2015federated}
J.~Kone{\v{c}}n{\`y}, B.~McMahan, and D.~Ramage, ``Federated optimization:
  Distributed optimization beyond the datacenter,'' \emph{arXiv preprint
  arXiv:1511.03575}, 2015.

\bibitem{konevcny2016federated}
J.~Kone{\v{c}}n{\`y}, H.~B. McMahan, D.~Ramage, and P.~Richt{\'a}rik,
  ``Federated optimization: Distributed machine learning for on-device
  intelligence,'' \emph{arXiv preprint arXiv:1610.02527}, 2016.

\bibitem{samarakoon2019distributed}
S.~Samarakoon, M.~Bennis, W.~Saad, and M.~Debbah, ``Distributed federated
  learning for ultra-reliable low-latency vehicular communications,''
  \emph{IEEE Trans. Commun.}, vol.~68, no.~2, pp. 1146--1159, 2019.

\bibitem{pokhrel2020federated}
S.~R. Pokhrel, ``Federated learning meets blockchain at 6g edge: A
  drone-assisted networking for disaster response,'' in \emph{ACM MobiCom},
  2020, pp. 49--54.

\bibitem{elbir2020federated}
A.~M. Elbir, B.~Soner, and S.~Coleri, ``Federated learning in vehicular
  networks,'' \emph{arXiv preprint arXiv:2006.01412}, 2020.

\bibitem{posner2021federated}
J.~Posner, L.~Tseng, M.~Aloqaily, and Y.~Jararweh, ``Federated learning in
  vehicular networks: opportunities and solutions,'' \emph{IEEE Netw.},
  vol.~35, no.~2, pp. 152--159, 2021.

\bibitem{rieke2020future}
N.~Rieke, J.~Hancox, W.~Li, F.~Milletari, H.~R. Roth, S.~Albarqouni, S.~Bakas,
  M.~N. Galtier, B.~A. Landman, K.~Maier-Hein \emph{et~al.}, ``The future of
  digital health with federated learning,'' \emph{NPJ Digit. Med.}, vol.~3,
  no.~1, pp. 1--7, 2020.

\bibitem{mao2017survey}
Y.~Mao, C.~You, J.~Zhang, K.~Huang, and K.~B. Letaief, ``A survey on mobile
  edge computing: {T}he communication perspective,'' \emph{IEEE Commun. Surv.
  Tutor.}, vol.~19, no.~4, pp. 2322--2358, 2017.

\bibitem{zhou2021communication}
S.~Zhou and G.~Y. Li, ``Communication-efficient {ADMM}-based federated
  learning,'' \emph{arXiv preprint arXiv:2110.15318}, 2021.

\bibitem{kairouz2019advances}
P.~Kairouz, H.~B. McMahan, B.~Avent, A.~Bellet, M.~Bennis, A.~N. Bhagoji,
  K.~Bonawitz, Z.~Charles, G.~Cormode, R.~Cummings \emph{et~al.}, ``Advances
  and open problems in federated learning,'' \emph{Found. Trends Mach. Learn.},
  vol.~14, no. 1-2, pp. 1--210, 2019.

\bibitem{li2020federated}
T.~Li, A.~K. Sahu, A.~Talwalkar, and V.~Smith, ``Federated learning:
  {C}hallenges, methods, and future directions,'' \emph{IEEE Signal Process.
  Mag.}, vol.~37, no.~3, pp. 50--60, 2020.

\bibitem{qin2021federated}
Z.~Qin, G.~Y. Li, and H.~Ye, ``Federated learning and wireless
  communications,'' \emph{IEEE Wirel. Commun.}, 2021.

\bibitem{stich2018sparsified}
S.~U. Stich, J.-B. Cordonnier, and M.~Jaggi, ``Sparsified {SGD} with memory,''
  \emph{NeurIPS}, vol.~31, 2018.

\bibitem{di2018efficient}
S.~Di, D.~Tao, X.~Liang, and F.~Cappello, ``Efficient lossy compression for
  scientific data based on pointwise relative error bound,'' \emph{IEEE Trans.
  Parallel Distrib. Syst.}, vol.~30, no.~2, pp. 331--345, 2018.

\bibitem{wangni2018gradient}
J.~Wangni, J.~Wang, J.~Liu, and T.~Zhang, ``Gradient sparsification for
  communication-efficient distributed optimization,'' \emph{NeurIPS}, vol.~31,
  2018.

\bibitem{sattler2019robust}
F.~Sattler, S.~Wiedemann, K.-R. M{\"u}ller, and W.~Samek, ``Robust and
  communication-efficient federated learning from non-iid data,'' \emph{IEEE
  Trans. Neural Netw. Learn. Syst.}, vol.~31, no.~9, pp. 3400--3413, 2019.

\bibitem{mcmahan2017communication}
B.~McMahan, E.~Moore, D.~Ramage, S.~Hampson, and B.~A. y~Arcas,
  ``Communication-efficient learning of deep networks from decentralized
  data,'' in \emph{Artificial Intelligence and Statistics}.\hskip 1em plus
  0.5em minus 0.4em\relax PMLR, 2017, pp. 1273--1282.

\bibitem{li2019convergence}
X.~Li, K.~Huang, W.~Yang, S.~Wang, and Z.~Zhang, ``On the convergence of
  {F}ed{A}vg on non-iid data,'' in \emph{ICLR}, 2020.

\bibitem{yu2019parallel}
H.~Yu, S.~Yang, and S.~Zhu, ``Parallel restarted {SGD} with faster convergence
  and less communication: Demystifying why model averaging works for deep
  learning,'' \emph{AAAI}, vol.~33, no.~1, pp. 5693--5700, 2019.

\bibitem{li2020federatedprox}
T.~Li, A.~K. Sahu, M.~Zaheer, M.~Sanjabi, A.~Talwalkar, and V.~Smith,
  ``Federated optimization in heterogeneous networks,'' \emph{MLSys}, vol.~2,
  pp. 429--450, 2020.

\bibitem{zhou2022efficient}
S.~Zhou and G.~Y. Li, ``Efficient and convergent federated learning,''
  \emph{arXiv preprint arXiv:2205.01438}, 2022.

\bibitem{stich2018local}
S.~U. Stich, ``Local {SGD} converges fast and communicates little,'' in
  \emph{ICLR}, 2019.

\bibitem{Lin2020Don}
T.~Lin, S.~U. Stich, K.~K. Patel, and M.~Jaggi, ``Don't use large mini-batches,
  use local {SGD},'' in \emph{ICLR}, 2020.

\bibitem{DeepLearning2015}
S.~Zhang, A.~Choromanska, and Y.~LeCun, ``Deep learning with elastic averaging
  {SGD},'' in \emph{NIPS}, vol.~1.\hskip 1em plus 0.5em minus 0.4em\relax
  Cambridge, MA, USA: MIT Press, 2015, p. 685–693.

\bibitem{AsynchronousStochastic2017}
S.~Zheng, Q.~Meng, T.~Wang, W.~Chen, N.~Yu, Z.-M. Ma, and T.-Y. Liu,
  ``Asynchronous stochastic gradient descent with delay compensation,'' in
  \emph{ICML}, vol.~7, 2017, p. 4120–4129.

\bibitem{wang2021cooperative}
J.~Wang and G.~Joshi, ``Cooperative {SGD}: {A} unified framework for the design
  and analysis of local-update {SGD} algorithms,'' \emph{J. Mach. Learn. Res.},
  vol.~22, no. 213, pp. 1--50, 2021.

\bibitem{ding2019stochastic}
J.~Ding, S.~M. Errapotu, H.~Zhang, Y.~Gong, M.~Pan, and Z.~Han, ``Stochastic
  {ADMM} based distributed machine learning with differential privacy,'' in
  \emph{EAI SecureComm}.\hskip 1em plus 0.5em minus 0.4em\relax Springer, 2019,
  pp. 257--277.

\bibitem{Inexact-ADMM2021}
S.~Yue, J.~Ren, J.~Xin, S.~Lin, and J.~Zhang, ``Inexact-{ADMM} based federated
  meta-learning for fast and continual edge learning,'' in \emph{SIGMOBILE},
  2021, pp. 91--100.

\bibitem{ryu2022differentially}
M.~Ryu and K.~Kim, ``Differentially private federated learning via inexact
  {ADMM} with multiple local updates,'' \emph{arXiv preprint arXiv:2202.09409},
  2022.

\bibitem{zhang2020fedpd}
X.~Zhang, M.~Hong, S.~Dhople, W.~Yin, and Y.~Liu, ``Fedpd: {A} federated
  learning framework with optimal rates and adaptivity to non-iid data,''
  \emph{IEEE Trans. Signal Process.}, vol.~69, pp. 6055--6070, 2021.

\bibitem{zhou2022federated}
S.~Zhou and G.~Y. Li, ``Federated learning via inexact {ADMM},'' \emph{arXiv
  preprint arXiv:2204.10607}, 2022.

\bibitem{li2020secure}
Y.~Li, T.-H. Chang, and C.-Y. Chi, ``Secure federated averaging algorithm with
  differential privacy,'' in \emph{2020 IEEE 30th Int. Workshop Mach. Learn.
  Signal Process.}\hskip 1em plus 0.5em minus 0.4em\relax IEEE, 2020, pp. 1--6.

\bibitem{li2022federated}
Y.~Li, S.~Wang, T.-H. Chang, and C.-Y. Chi, ``Federated stochastic primal-dual
  learning with differential privacy,'' \emph{arXiv preprint arXiv:2204.12284},
  2022.

\bibitem{dwork2006calibrating}
C.~Dwork, F.~McSherry, K.~Nissim, and A.~Smith, ``Calibrating noise to
  sensitivity in private data analysis,'' in \emph{TCC}.\hskip 1em plus 0.5em
  minus 0.4em\relax Springer, 2006, pp. 265--284.

\bibitem{chaudhuri2011differentially}
K.~Chaudhuri, C.~Monteleoni, and A.~D. Sarwate, ``Differentially private
  empirical risk minimization.'' \emph{J. Mach. Learn. Res.}, vol.~12, no.~3,
  2011.

\bibitem{abadi2016deep}
M.~Abadi, A.~Chu, I.~Goodfellow, H.~B. McMahan, I.~Mironov, K.~Talwar, and
  L.~Zhang, ``Deep learning with differential privacy,'' in \emph{ACM SIGSAC},
  2016, pp. 308--318.

\bibitem{naseri2020local}
M.~Naseri, J.~Hayes, and E.~De~Cristofaro, ``Local and central differential
  privacy for robustness and privacy in federated learning,'' \emph{arXiv
  preprint arXiv:2009.03561}, 2020.

\bibitem{wei2020federated}
K.~Wei, J.~Li, M.~Ding, C.~Ma, H.~H. Yang, F.~Farokhi, S.~Jin, T.~Q. Quek, and
  H.~V. Poor, ``Federated learning with differential privacy: Algorithms and
  performance analysis,'' \emph{IEEE Trans. Inf. Forensics Secur.}, vol.~15,
  pp. 3454--3469, 2020.

\bibitem{guo2018practical}
Y.~Guo and Y.~Gong, ``Practical collaborative learning for crowdsensing in the
  internet of things with differential privacy,'' in \emph{CNS}.\hskip 1em plus
  0.5em minus 0.4em\relax IEEE, 2018, pp. 1--9.

\bibitem{zhang2018improving}
X.~Zhang, M.~M. Khalili, and M.~Liu, ``Improving the privacy and accuracy of
  {ADMM}-based distributed algorithms,'' in \emph{ICML}.\hskip 1em plus 0.5em
  minus 0.4em\relax PMLR, 2018, pp. 5796--5805.

\bibitem{huang2019dp}
Z.~Huang, R.~Hu, Y.~Guo, E.~Chan-Tin, and Y.~Gong, ``{DP-ADMM}: {ADMM}-based
  distributed learning with differential privacy,'' \emph{IEEE Trans. Inf.
  Forensics Secur.}, vol.~15, pp. 1002--1012, 2019.

\bibitem{wang2020privacy}
X.~Wang, H.~Ishii, L.~Du, P.~Cheng, and J.~Chen, ``Privacy-preserving
  distributed machine learning via local randomization and {ADMM}
  perturbation,'' \emph{IEEE Trans. Signal Process.}, vol.~68, pp. 4226--4241,
  2020.

\bibitem{zhang2016dynamic}
T.~Zhang and Q.~Zhu, ``Dynamic differential privacy for {ADMM}-based
  distributed classification learning,'' \emph{IEEE Trans. Inf. Forensics
  Secur.}, vol.~12, no.~1, pp. 172--187, 2016.

\bibitem{zhang2018recycled}
X.~Zhang, M.~M. Khalili, and M.~Liu, ``Recycled {ADMM}: {I}mprove privacy and
  accuracy with less computation in distributed algorithms,'' in
  \emph{Allerton}.\hskip 1em plus 0.5em minus 0.4em\relax IEEE, 2018, pp.
  959--965.

\bibitem{ryu2021differentially}
M.~Ryu and K.~Kim, ``Differentially private federated learning via inexact
  {ADMM},'' \emph{arXiv preprint arXiv:2106.06127}, 2021.

\bibitem{zou2005regularization}
H.~Zou and T.~Hastie, ``Regularization and variable selection via the elastic
  net,'' \emph{J. R. Stat. Soc., B: Stat. Methodol.}, vol.~67, no.~2, pp.
  301--320, 2005.

\bibitem{kohavi1996scaling}
R.~Kohavi \emph{et~al.}, ``Scaling up the accuracy of naive-bayes classifiers:
  A decision-tree hybrid.'' in \emph{Kdd}, vol.~96, 1996, pp. 202--207.

\bibitem{asuncion2007uci}
A.~Asuncion and D.~Newman, ``{UCI} machine learning repository,'' 2007.

\end{thebibliography}

\newpage

%
%
%
%

\vfill

\end{document}